\ifpdf \DeclareGraphicsRule{*}{mps}{*}{} \fi
\renewcommand{\cite}{\citep}
\newcommand{\KL}{\mathrm{KL}}
\newcommand{\dx}{\mathrm{d}x}
\newcommand{\dy}{\mathrm{d}\bm{y}}
\newcommand{\dz}{\mathrm{d}z}
\newcommand{\relu}{\mathtt{relu}}
\newcommand{\diag}{\mathtt{diag}}
\newcommand{\fim}{\mathcal{G}}
\newcommand{\calI}{\mathcal{I}}
\def\calL{\mathcal{L}}
\DeclareMathOperator*{\argmin}{arg\,min}
\DeclareMathOperator*{\argmax}{arg\,max}
\crefname{subsection}{Subsec.}{Subsecs.}
\crefname{section}{Sec.}{Secs.}
\newtheorem{lemma}{Lemma}
\newtheorem{theorem}{Theorem}
\newtheorem{corollary}[theorem]{Corollary}
\newtheorem{remark}[theorem]{Remark}
\newcommand{\defeq}{\stackrel{def}{=}}
\newcommand*{\affaddr}[1]{#1} 
\newcommand*{\email}[1]{\texttt{#1}}
\begin{document} 




\title{Coarse Grained Exponential Variational Autoencoders}
\author{Ke Sun\hspace{1.5em}and\hspace{1.5em}Xiangliang Zhang\\
\affaddr{Computer, Electrical and Mathematical Sciences and Engineering Division\\
King Abdullah University of Science and Technology (KAUST)}\\
\email{sunk@ieee.org~~~xiangliang.zhang@kaust.edu.sa}}

\date{\today}
\maketitle

\begin{abstract}
Variational autoencoders (VAE) often use Gaussian or category distribution to model the inference process.
This puts a limit on variational learning because this simplified assumption does not match 
the true posterior distribution, which is usually much more sophisticated.
To break this limitation and apply arbitrary parametric distribution during inference,
this paper derives a \emph{semi-continuous} latent representation,
which approximates a continuous density up to a prescribed precision,
and is much easier to analyze than its continuous counterpart because it is fundamentally discrete.
We showcase the proposition by applying polynomial exponential family distributions as the posterior,
which are universal probability density function generators.
Our experimental results show consistent improvements over commonly used VAE models.
\end{abstract} 

\section{Introduction}\label{sec:intro}

Variational autoencoders~\cite{vae} and its variants~\cite{rezende14,sohn15,salimans15,burda16,serban17}
combine the two powers of variational Bayesian learning~\cite{jordan99}
with strong generalization and a standard learning objective, and deep learning with flexible and scalable representations.
They are attracting decent attentions,
producing state-of-the-art performance in semi-supervised learning~\cite{kingma14} and image generation~\cite{gregor15},
and are getting applied in diverse areas such as deep generative modeling~\cite{rezende14},
image segmentation~\cite{sohn15}, clustering~\cite{dilokthanakul17}, and future prediction from images~\cite{walker16}.

This paper discusses unsupervised learning with VAE which pipes an inference model
$q(\bm{z}\,\vert\,\bm{x})$ with a generative model $p(\bm{x}\,\vert\,\bm{z})$, 
where $\bm{x}$ and $\bm{z}$ are observed and latent variables, respectively.
A simple parameter-free prior $p(\bm{z})$ combined with $p(\bm{x}\,\vert\,\bm{z})$
parameterized by a deep neural network results in arbitrarily flexible representations.
However, its (very complex) posterior $p(\bm{z}\,\vert\,\bm{x})$ must be within the
representation power of the inference machine $q(\bm{z}\,\vert\,\bm{x})$,
so that the variational bound is tight and variational learning is effective.

In the original VAE~\cite{vae}, $q(\bm{z}\,\vert\,\bm{x})$ obeys a
Gaussian distribution with a diagonal covariance matrix. This is a very simplified assumption, because Gaussian is
the maximum entropy (least informative) distribution with respect to prescribed
mean and variance and has one single mode, while human inference can be
ambiguous and can have a bounded support when we exclude very unlikely cases~\cite{ev}.

Many recent works try to tackle this limitation.
Jang et al.~\citeyearpar{jang17} extended VAE to effectively use a discrete latent $\bm{z}$ following a category distribution (e.g. Bernoulli distribution).
Kingma et al.~\citeyearpar{kingma14} extended the latent structure with a combination of continuous and discrete latent variables (class labels)
and applied the model into semi-supervised learning.
Similarly, Shu et al.~\citeyearpar{shu16} and Dilokthanakul et al.~\citeyearpar{dilokthanakul17} proposed to use a Gaussian mixture latent model in VAE.
Serban et al.~\citeyearpar{serban17} applied a piecewise constant distribution on $\bm{z}$.

This work contributes a new ingredient in VAE model construction. To tackle the difficulty
in dealing with complex probability density function (pdf) $p(\bm{z})$ $(\bm{z}\in\mathcal{Z})$,
we generate instead a semi-continuous $\bm{z}\in\mathcal{Z}$, by first discretizing
the support $\mathcal{Z}$ into a grid, then drawing a discrete sample $\bm{y}$ based on the corresponding
probability mass function (pmf), and then reconstruct $\bm{y}$ into $\bm{z}\in\mathcal{Z}$.
This coarse grain (CG) technique can help apply \emph{any} pdf into VAE.
Hence we apply a bounded polynomial exponential family (BPEF)
as the underlying $p(\bm{z})$, which is a \emph{universal} pdf generator.
This fits in the spirit of neural networks because the prior and posterior
are not hand-crafted but \emph{learned by themselves}.

This contribution blends theoretical insights with empirical developments.
We present CG, BPEF, information monotonicity, etc., that are useful ingredients for general VAE modeling.
Notably, we present a novel application scenario with new analysis on the Gumbel softmax trick~\cite{jang17,maddison16}.
We assemble these components into a machine CG-BPEF-VAE and present empirical results on 
unsupervised density estimation, showing improvements over vanilla VAE~\cite{vae} and category VAE~\cite{jang17}.
We present a novel perspective with theoretical analysis of VAE learning,
with guaranteed bounds derived from information geometry~\cite{amari16}.

This paper is organized as follows.
\Cref{sec:vae} reviews the basics of VAE.
\Cref{sec:cgvae} introduces CG-VAE and its implementation CG-BPEF-VAE.
\Cref{sec:exp} performs an empirical study on two different datasets.
\Cref{sec:ig} gives a theoretical analysis on VAE learning.
\Cref{sec:con} states our concluding remarks.

\section{Prerequisites: Variational Autoencoders}\label{sec:vae}

This section covers the basics from a brief introduction
of variational Bayes to previous works on VAE.
A generative model can be specified by a joint distribution
between the observables $\bm{x}$ and the hidden variables $\bm{z}$, that is,
$p(\bm{x},\bm{z}\,\vert\,\bm\theta)
=p(\bm{z}\,\vert\,\bm\theta_z)p(\bm{x}\,\vert\,\bm{z},\bm\theta_{x|z})$
where $\bm\theta=(\bm\theta_z,\bm\theta_{x|z})$. 
By Jensen's inequality,
\begin{align}\label{eq:bound}
&-\log p(\bm{x}\,\vert\,\bm\theta)
=
-\log \int q(\bm{z}\,\vert\,\bm{x},\bm\varphi)
\frac{p(\bm{x},\bm{z}\,\vert\,\bm\theta)}{q(\bm{z}\,\vert\,\bm{x},\bm\varphi)} d\bm{z}
\nonumber\\
&\le
\int q(\bm{z}\,\vert\,\bm{x},\bm\varphi)
\log \frac{q(\bm{z}\,\vert\,\bm{x},\bm\varphi)}{p(\bm{x},\bm{z}\,\vert\,\bm\theta)}d\bm{z}
\left(\defeq\mathcal{L}(\bm\theta,\bm\varphi)\right),
\end{align}
for any $q(\bm{z}\,\vert\,\bm{x},\bm\varphi)$.
The upper bound $\mathcal{L}(\bm\theta,\bm\varphi)$
on the RHS is known as the ``variational free energy''. We have
\begin{align*}
&\mathcal{L}(\bm\theta,\bm\varphi) =
\underbrace{\KL(q(\bm{z}\,\vert\,\bm{x},\bm\varphi) : p(\bm{z}\,\vert\,\bm\theta_z))}_{term_1}\nonumber\\
&\hspace{5em}\underbrace{-\int q(\bm{z}\,\vert\,\bm{x},\bm\varphi) \log{p}(\bm{x}\,\vert\,\bm{z},\bm\theta_{x|z}) d\bm{z}}_{term_2},
\end{align*}
where $\KL(\cdot:\cdot)$ denotes the Kullback-Leibler (KL) divergence.
(We will use $term_1$ and $term_2$ as short-hands for the two terms 
whose sum is $\mathcal{L}(\bm\theta,\bm\varphi)$.
One has to remember that they are functions of $\bm\theta$ and $\bm\varphi$.)
We therefore minimize the free energy 
with respect to both $\bm\theta$ and $\bm\varphi$ so as to minimize
$-\log{p}(\bm{x}\,\vert\,\bm\theta)$. The gap of the bound in \cref{eq:bound} is
$
\mathcal{L}(\bm\theta,\bm\varphi) - (-\log p(\bm{x}\,\vert\,\bm\theta)) =
\KL\left(q(\bm{z}\,\vert\,\bm{x}, \bm\varphi) : p(\bm{z}\,\vert\,\bm{x}, \bm\theta)\right)
$,
which can be small as long as the parameter manifold of $q(\bm{z}\,\vert\,\bm{x}, \bm\varphi)$ 
(e.g. constructed based on the mean field technique, \citealt{jordan99})
encompasses a good estimation of the true posterior.

VAE~\cite{vae} assume the following generative process.  The prior
$p(\bm{z}\,\vert\,\bm\theta_z)=G(\bm{z}\,\vert\,\bm0,\bm{I})$ is parameter free,
where $G(\cdot\,\vert\,\bm\mu,\bm\Sigma)$ denotes a Gaussian distribution with
mean $\bm\mu$ and covariance matrix $\bm\Sigma$. 
Denote $\dim\bm{x}=D$ and $\dim\bm{z}=d$. The conditional mapping
$p(\bm{x}\,\vert\,\bm{z},\bm\theta) = 
\prod_{i=1}^D p\left(x_i\,\vert\, f(\bm{z}, \bm\theta) \right)$
is parametrized by a neural network $f(\bm{z}, \bm\theta)$ with input $\bm{z}$ and parameters $\bm\theta$.
For binary $\bm{x}$, $p(x_i\,\vert\,\cdot)$ is a Bernoulli distribution;
for continuous $\bm{x}$, $p(x_i\,\vert\,\cdot)$ can be univariate Gaussian.
This gives a very flexible $p(\bm{x}\,\vert\,\bm\theta)$ to adapt the complex data manifold.

In this case, it is hard to select the parameter form of
$q(\bm{z}\,\vert\,\bm{x},\bm\varphi)$, as the posterior 
$p(\bm{z}\,\vert\,\bm{x},\bm\varphi)$ has no closed form solution.
VAE borrows again the representation power of neural networks and lets
$q(\bm{z}\,\vert\,\bm{x},\bm\varphi)=
G(\bm{z}\,\vert\,\bm\mu(\bm{x}, \bm\varphi),\,\diag(\bm\lambda(\bm{x}, \bm\varphi)))$,
where $\bm\mu(\bm{x}, \bm\varphi)$ and
$\bm\lambda(\bm{x}, \bm\varphi)$ are both neural networks with input $\bm{x}$ and 
parameters $\bm\varphi$, and $\diag(\cdot)$ means a diagonal matrix constructed with a
given diagonal vector. The assumption of a diagonal covariance is for reducing
the network size so as to be efficient and to control overfitting.

Since the KL of Gaussians is available in closed form, $term_1$ has an analytical solution.
In order to solve the integration in $term_2$, VAE employs a \emph{reparameterization trick}.
It draws $L$ i.i.d. samples
$\bm\epsilon^1,\cdots,\bm\epsilon^L\sim{}G(\bm\epsilon\,\vert\,\bm0,\bm{I})$,
where $\bm{I}$ is the identity matrix.
Let $\bm{z}^l=\bm\mu(\bm{x}, \bm\varphi) + \bm\lambda(\bm{x},\bm\varphi)\circ\bm\epsilon^l$,
where ``$\circ$'' denotes element-wise product.
Then $\bm{z}^l\sim{G}(\bm{z}\,\vert\,\bm\mu(\bm{x}, \bm\varphi), 
\diag(\bm\lambda(\bm{x}, \bm\varphi)))$. Hence
$$term_2 \approx -\frac{1}{L}\sum_{l=1}^{L}
\log p\left(\bm{x}\,\vert\, \bm\mu(\bm{x}, \bm\varphi) +
\bm\lambda(\bm{x}, \bm\varphi)\circ\bm\epsilon^l, \bm\theta \right).$$
This trick allows error to backpropagate through the random mapping $(\bm\mu,\bm\lambda)\leadsto\bm{z}$.

Then $\mathcal{L}(\bm\theta,\bm\varphi) = term_1 + term_2$
can be expressed as simple arithmetic operations of
the outputs of the hidden layer and the last layer. It can therefore be
optimized e.g. with stochastic gradient descent. The optimization
technique is called stochastic gradient variational Bayes (SGVB).
The resulting architecture is presented in \cref{fig:vae}.

\begin{figure*}[t]
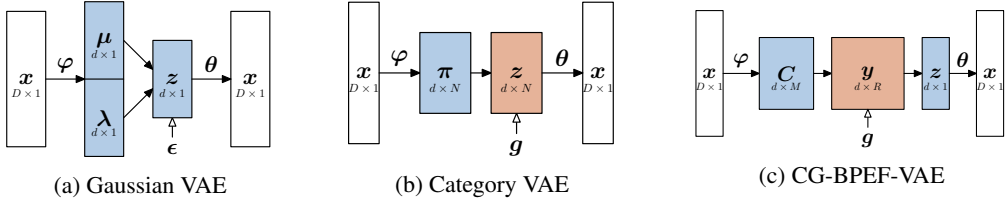

\centering
\begin{subfigure}[m]{.3\textwidth}
\centering\includegraphics[width=.8\textwidth]{vae.1}
\caption{Gaussian VAE}\label{fig:vae}
\end{subfigure}
\begin{subfigure}[m]{.3\textwidth}
\centering\includegraphics[width=.8\textwidth]{vae.2}
\caption{Category VAE}\label{fig:cvae}
\end{subfigure}
\begin{subfigure}[m]{.35\textwidth}
\centering\includegraphics[width=.8\textwidth]{vae.3}
\caption{CG-BPEF-VAE}\label{fig:cgvae}
\end{subfigure}
\caption{Architecture of three different VAEs. Blue indicates that
the corresponding variable is continuous; red means discrete.
CG-BPEF-VAE features a sandwich structure.}
\end{figure*}

\section{CG-BPEF-VAE}\label{sec:cgvae}

We would like to extend VAE to incorporate a \emph{general} inference process,
where the model can learn by itself a proper $p(\bm{z}\,\vert\,\bm{x},\bm\varphi)$
within a flexible family of distributions, which is not limited to Gaussian or category distributions
and can capture higher order moments of the posterior.
We will therefore derive in this section a variation of VAE called
CG-BPEF-VAE for Coarse-Grained Bounded Polynomial Exponential Family VAE.

\subsection{Bounded Polynomial Exponential Family}\label{subsec:bpef}

We try to model the latent $\bm{z}$ with a \emph{factorable} polynomial
exponential family (PEF) \citep{pef,patchpef} probability density function:
\begin{equation}\label{eq:bpef}
p(\bm{z}) = \prod_{j=1}^d \exp\left(\sum_{m=1}^{M} c_{jm} z_j^{m} -\psi(\bm{c}_j)\right),
\end{equation}
where $M$ is the polynomial order,
$\bm{C}=(c_{jm})_{d\times{M}}$ denotes the polynomial coefficients,
and $\psi$ is a convex cumulant generating function~\cite{amari16}.
This PEF family can be regarded as the most general parameterization,
because with large enough $M$ it can approximate \emph{arbitrary finely}
any given $p(\bm{z})$ satisfying weak regularity conditions~\citep{pef}.

Furthermore, we constrain $\bm{z}$ to have a \emph{bounded support} so that $\bm{z}\in[-1,1]^d$,
a hypercube. This gives $\bm{z}$ a focused density that is not wasted on unlikely cases,
which is in contrast to Gaussian distribution with non-zero probability on the whole real line.
This also allows one to easily explore \emph{extreme cases} by setting $z_j$ to $\pm1$ or beyond.

For example, if $M=2$, then the resulting $p(z_j)\propto\exp\left(c_{j1}z_j + c_{j2}z_j^2\right)$
includes the truncated Gaussian distribution (with one mode) as a special case when $c_{j2}<0$.
Moreover, the setting $c_{j2}\ge0$ encompasses more general cases and can have at most two modes.

The two important elements in constructing a VAE model are
\ding{192} the KL divergence between $q(\bm{z}\,\vert\,\bm{x},\bm\varphi)$
and $p(\bm{z}\,\vert\,\bm\theta_z)$ must have a closed form;
\ding{193} a random sample of $q(\bm{z}\,\vert\,\bm{x},\bm\varphi)$ can be
expressed as a simple function between its parameters 
and some parameter-free random variables.
Neither of these conditions are met for BPEF.
We will address these difficulties in the remainder of this section.

\subsection{Coarse Grain}

Our basic idea is to reduce the BPEF pdf into a discrete distribution,
then draw samples based on the pmf, then reconstruct the continuous sample.

We sample $R$ points uniformly on the interval $[-1,1]$:
$$\bm\zeta=\left( -1, -1+\frac{2}{R-1}, \cdots, 1-\frac{2}{R-1}, 1 \right)^\intercal,$$
where the $r$'th discrete value is $\zeta_r=\frac{2r-(R+1)}{R-1}$. 
For example, choosing $R=21$ results in a precision of 0.1.
In correspondence to these $R$ locations, we assume for the $j$'th latent dimension
a random $\bm{y}_j$ in $\Delta^{R-1}$, the $(R-1)$-dimensional probability simplex,
so that $\sum_{r=1}^R y_{jr}=1$, $\forall{r}$, $y_{jr}\ge0$.
This $y_{jr}$ means the likelihood for $z_j$ taking the value $\zeta_r$.
Intuitively, if we constrain $\bm{y}_j$ to be one-hot (with probability mass only on vertices of $\Delta^{R-1}$),
and let $P(y_{jr}=1)\propto\exp(\sum_{m=1}^M c_{jm} \zeta_r^m)$, then the expectation
$z_j=\sum_{r=1}^R y_{jr} \zeta_r\in[-1,1]$ will be distributed like the BPEF in \cref{eq:bpef}.

However, to apply the reparameterization trick, it is not known how to express a random one-hot sample $\bm{y}_j$
as a simple function of the activation probabilities. Nor does Dirichlet
distribution as a commonly-used density on $\Delta^{R-1}$ can do the trick.

This reparameterization problem of category distribution is studied recently~\cite{jang17,maddison16}
following earlier developments~\cite{kuzmin2005,maddison14} on applying extreme
value distributions~\cite{ev} to machine learning.
Based on these previous studies, we let $\bm{y}_j$ follow a \emph{Concrete distribution}~\cite{maddison16},
which is a continuous relaxation of the category distribution, with the key 
advantage that Concrete samples can be easily drawn to be applied to VAE.
Details are explained as follows.



The standard Gumbel distribution~\cite{gumbel54} is defined on the support 
$g\in\Re$ with the cumulative distribution function $P\left(g\le{}x\right) = e^{-e^{-x}}$.
Therefore Gumbel samples can be easily obtained by inversion sampling
$g=-\log(-\log{U})$, where $U$ is uniform on $(0,1)$.
Let $g_{jr}$ follows standard Gumbel distribution, then the random variable $\bm{y}_j\in\Delta^{R-1}$
defined by 
\begin{equation*}
y_{jr}=\frac{\exp\left((g_{jr}+\phi_{jr})/T\right)}{\sum_{r=1}^R \exp\left((g_{jr}+\phi_{jr})/T\right)}
\end{equation*}
is said to follow a Concrete distribution with location parameter $\bm\phi_{j}$ and temperature parameter $T$:
$\bm{y}_{j}\sim{}\mathrm{Con}(\bm\phi_j,T)$. This distribution has a closed-form probability density function
(see \citealt{maddison16}) and has the following fundamental property
\begin{align}\label{eq:gumbelmax}
\forall{r},\quad%
&P\left(\lim_{T\to0^+} y_{jr}=1\right)
= P\left(y_{jr} > y_{jo}, \forall{o\neq{r}}\right)\nonumber\\
&= {\exp(\phi_{jr})}/{\sum_{r=1}^R \exp(\phi_{jr})}
\;\left(\defeq\alpha_{jr}\right).
\end{align}
Basically, at the limit $T\to0^+$, the density will be pushed to the vertices
of $\Delta^{R-1}$, and Concrete random vectors $\bm{y}_{j}$
tend to be onehot, with activation probability of the $r$'th bit defined by $\alpha_{jr}$.
Hence it can be considered as a relaxation~\cite{maddison16} of the category distribution.
See \cref{fig:concrete} for an intuitive view of the Concrete distribution.
There are heavy volumes of densities around the vertices.

\begin{figure}[t]
\centering
\includegraphics[width=.8\textwidth]{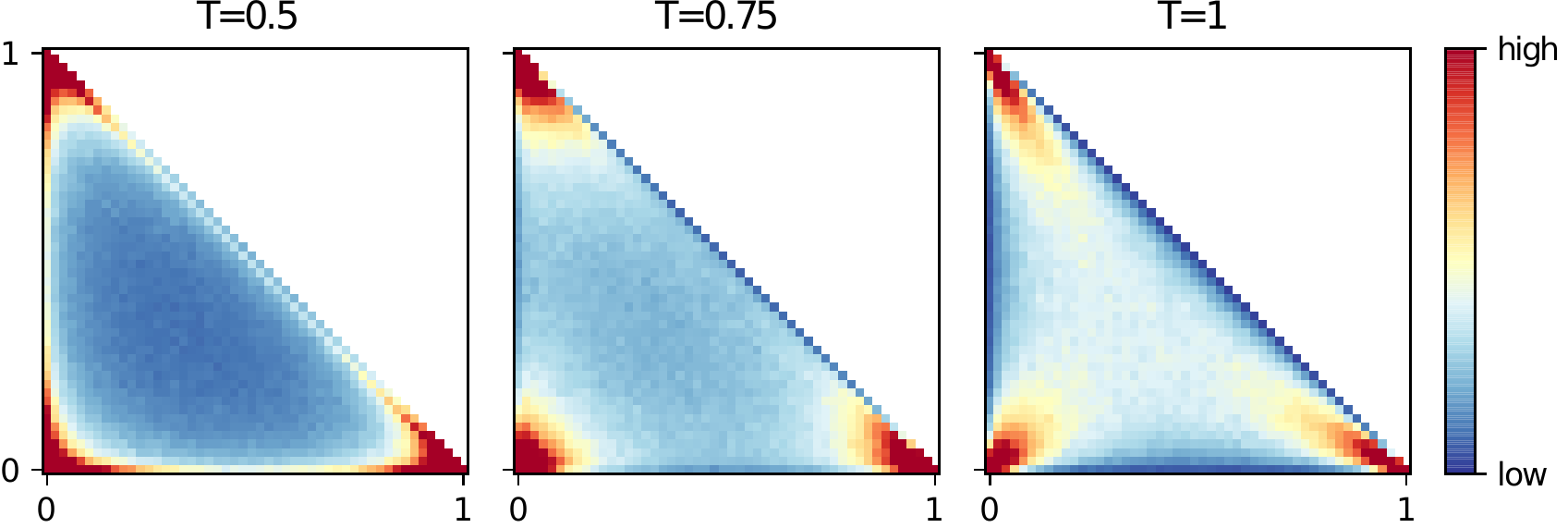}
\caption{Density distribution of $\mathrm{Con}((1/3,1/3,1/3),T)$ over $\Delta^2$.
The figure is generated by random sampling (see appendix for more figures and discussions).}\label{fig:concrete}
\end{figure}


In our case, let $\phi_{jr}=\sum_{m=1}^{M} c_{jm} \zeta_r^m$,
then the odds for $y_{jr}$ activated (i.e., the probability for $z_{j}$ taking the value $\zeta_r$)
will be proportional to $\exp\left(\sum_{m=1}^M c_{jm}\zeta_r^m\right)$ at the limit $T\to0^+$.
This provides a way to simulate the BPEF density.




\subsection{The Model}


Based on previous subsections, we assume the following generation process
\begin{align*}
&\alpha_{jr} = \sum_{m=1}^{M} a_{jm} \zeta_r^m,
\quad{}p(\bm{y}\,\vert\,\bm{a}) = \prod_{j=1}^d
\mathrm{Con}\left(\bm{y}_j\,\vert\,\bm{\alpha}_{j},T\right),\\
&{z}_{j}(\bm{a}) = \bm{y}_j^\intercal\bm\zeta,
\hspace{3em}p(\bm{x}\,\vert\,\bm{z},\bm\theta)
= \prod_{i=1}^D p\left(x_i\,\vert\,f\left(\bm{z}, \bm\theta\right)\right),
\end{align*}
where $\bm{A}=(a_{jm})_{d\times{}M}$ is the parameters of the prior\footnote{Strictly speaking the prior
distribution only contains hyper-parameters that are set a priori. Here the term ``prior''
is more like a prior structure with learned parameters.}, and $f$ is defined by a neural network.
One should always choose $M<R-1$, because the polynomial $\sum_{m=1}^{R-1} c_{jm} \zeta_{r}^m$ with $R-1$
free parameters can already represent any distribution in $\Delta^{R-1}$.
The setting $M\ge{}R-1$ makes the polynomial structure redundant.

The corresponding inference process is given by
\begin{align*}
\beta_{jr} &= \sum_{m=1}^M b_{jm}(\bm{x},\bm\varphi) \zeta_r^m,\\
q(\bm{y}\,\vert\,\bm{x},\bm\varphi) &= \prod_{j=1}^d \mathrm{Con}
\left( \bm{y}_j\,\vert\,\bm\beta_j, T \right),\\
z_j(\bm{x},\bm\varphi) &= \bm{y}_j^\intercal \bm\zeta,
\end{align*}
where $\bm{B}(\bm{x},\bm\varphi)=(b_{jm}(\bm{x},\bm\varphi))_{d\times{}M}$ is defined by a neural network.
By Monte Carlo integration, it is straightforward that 
\begin{align*}
&term_2 \approx \frac{1}{L}\sum_{l=1}^L \sum_{i=1}^D
\log p\left(x_i\,\vert\, f\left( \sum_{r=1}^R  y_{\bullet{}r}^l\zeta_r, \bm\theta \right)\right),\nonumber\\
&y_{jr}^l=
\frac{\exp\left((g_{jr}^l +\sum_{m=1}^{M} b_{jm}(\bm{x},\bm\varphi) \zeta_r^m)/T\right)}
{\sum_{r=1}^R \exp\left((g_{jr}^l +\sum_{m=1}^{M} b_{jm}(\bm{x},\bm\varphi) \zeta_r^m)/T\right)},
\end{align*}
where $(g_{jr}^l)$ is a 3D tensor of independent Gumbel variables,
and the approximation becomes accurate when $L\to\infty$.

For simplicity, we assume $T$ to be the same scalar during generation and inference.
We adopt a simple annealing process of $T$, starting from $T_{\max}$, exponentially
decaying to $T_{\min}$ in the first half of training epochs, then keeping $T_{\min}$.
The study~\cite{jang17} implies that $T_{\min}=0.5\sim1$ could be small enough to make the
Concrete distribution approximate well a category distribution. The setting of
$T_{\min}$ will affect the computation of $term_1$, which will be explained in the following subsection.



\subsection{Information Mononicity}\label{subsec:kl}

We need to compute $term_1$ which is the KL divergence between the posterior 
$p(\bm{z}\,\vert\,\bm{x},\bm\varphi)$ the prior $p(\bm{z}\,\vert\,\bm{a})$. This is
the most complex part because these pdfs are not in closed form. However, we know
that as $T\to0^+$ they converge to categories distributions over $R$ evenly spanned positions 
on  $[-1,1]$ (the vector $\bm\zeta$).
Therefore we approximate $term_1$ with the KL divergence between the corresponding category distributions, that is,
\begin{align}\label{eq:monotonicity}
term_1 \approx&
\sum_{j=1}^d \sum_{r=1}^R \Bigg[
\frac{\exp(\beta_{jr})}{\sum_{r=1}^R \exp(\beta_{jr})}\nonumber\\
&
\times\log\frac{ \exp(\beta_{jr})/(\sum_{r=1}^R \exp(\beta_{jr}))}
{\exp(\alpha_{jr})/(\sum_{r=1}^R \exp(\alpha_{jr}))} \Bigg]\nonumber\\
=&\sum_{j=1}^d \bigg[
\frac{\sum_{r=1}^R\exp( \beta_{jr} ) (\beta_{jr}-\alpha_{jr}) }{\sum_{r=1}^R\exp(\beta_{jr})}
\nonumber\\
&+\log\sum_{r=1}^R\exp(\alpha_{jr} )
 -\log\sum_{r=1}^R\exp(\beta_{jr} ) \bigg].
\end{align}

In the rest of this subsection we give theoretical and empirical justifications
of this approximation.
KL divergence belongs to Csisz\'ar's $f$-divergence family and therefore
satisfy the well-known information monotonicity~\cite{amari16}. 
Basically, the support $V$ can be partitioned into subregions $\{V_r\}$ with zero
volume overlap, so that $V=\uplus{}V_r$. Denote by $p_1(V_r) = \int_{x\in{V}_r} p_1(x) \dx$
the probability mass of $V_r$, then $\sum_{r}p_1(V_r)=1$ and the pmf $\{p_1(V_r)\}$
is a coarse grained version of $p_1(x)$. The information monotonicity principle states that
$\KL(p_1:p_2) \ge \sum_{r} p_1(V_r) \log\frac{p_1(V_r)}{p_2(V_r)}$.
See~\cite{klmm} for an analysis. Based on this principle, we have the following result.
\begin{theorem}\label{thm:mono}
\begin{align*}
\text{\ding{192}}\;\KL(q(\bm{y}\,\vert\,\bm{x},\bm\varphi):p(\bm{y}\,\vert\,\bm{a})) &\ge
\KL(q(\bm{z}\,\vert\,\bm{x},\bm\varphi):p(\bm{z}\,\vert\,\bm{a}))\\
&= term_1;
\end{align*}
\ding{193} $\KL(q(\bm{y}\,\vert\,\bm{x},\bm\varphi):p(\bm{y}\,\vert\,\bm{a}))$ is also
lower bounded by the discrete KL given by the right hand side of \cref{eq:monotonicity}.
\end{theorem}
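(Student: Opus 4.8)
The plan is to obtain both statements from the information monotonicity principle recalled just above, applied to two different coarse grainings of the simplex-valued variable $\bm{y}$. First note that $q(\bm{y}\,\vert\,\bm{x},\bm\varphi)=\prod_{j=1}^d\mathrm{Con}(\bm{y}_j\,\vert\,\bm\beta_j,T)$ and $p(\bm{y}\,\vert\,\bm{a})=\prod_{j=1}^d\mathrm{Con}(\bm{y}_j\,\vert\,\bm\alpha_j,T)$ both factor over $j$, so that the left-hand side of both inequalities equals $\sum_{j=1}^d\KL(\mathrm{Con}(\bm\beta_j,T):\mathrm{Con}(\bm\alpha_j,T))$; since $z_j$ depends only on $\bm{y}_j$, the induced laws $q(\bm{z}\,\vert\,\bm{x},\bm\varphi)$ and $p(\bm{z}\,\vert\,\bm{a})$ factor over $j$ as well. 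It therefore suffices to argue one latent coordinate at a time and sum over $j$ at the end.

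For \ding{192}, consider the deterministic map $\bm{y}_j\mapsto z_j=\bm{y}_j^\intercal\bm\zeta$ from $\Delta^{R-1}$ onto $[-1,1]$. By construction of the model this map pushes $\mathrm{Con}(\bm\beta_j,T)$ forward to $q(z_j\,\vert\,\bm{x},\bm\varphi)$ and $\mathrm{Con}(\bm\alpha_j,T)$ forward to $p(z_j\,\vert\,\bm{a})$, so $\KL(q(\bm{z}\,\vert\,\bm{x},\bm\varphi):p(\bm{z}\,\vert\,\bm{a}))=term_1$ is exactly the sum over $j$ of the push-forward KLs. The fibers $\{\bm{y}_j:\bm{y}_j^\intercal\bm\zeta=t\}$ form a coarse graining of $\Delta^{R-1}$ indexed by a continuum $t\in[-1,1]$, and the corresponding version of information monotonicity — equivalently the data-processing inequality for KL under a measurable map — gives $\KL(\mathrm{Con}(\bm\beta_j,T):\mathrm{Con}(\bm\alpha_j,T))\ge\KL(q(z_j\,\vert\,\bm{x},\bm\varphi):p(z_j\,\vert\,\bm{a}))$. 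I would justify this in one line from the chain rule $\KL(q:p)=\KL(q_{z_j}:p_{z_j})+\mathbb{E}_{q_{z_j}}\big[\KL\big(q(\bm{y}_j\,\vert\,z_j):p(\bm{y}_j\,\vert\,z_j)\big)\big]$ together with nonnegativity of the conditional KL. Summing over $j$ delivers \ding{192}.

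For \ding{193}, I instead coarse grain $\Delta^{R-1}$ into the $R$ ``winner-take-all'' cells $V_r=\{\bm{y}_j\in\Delta^{R-1}:y_{jr}>y_{jo}\text{ for all }o\neq r\}$; these cover $\Delta^{R-1}$ and overlap only on the ties $\{y_{jr}=y_{jo}\}$, which have zero volume and hence zero mass under the absolutely continuous Concrete density, so the hypothesis of the monotonicity principle is met verbatim. By the Gumbel-max property \cref{eq:gumbelmax}, $\mathrm{Con}(\bm\beta_j,T)$ assigns mass $\exp(\beta_{jr})/\sum_{r'=1}^R\exp(\beta_{jr'})$ to $V_r$ and $\mathrm{Con}(\bm\alpha_j,T)$ assigns mass $\exp(\alpha_{jr})/\sum_{r'=1}^R\exp(\alpha_{jr'})$ to $V_r$, for every fixed $T>0$; information monotonicity then lower bounds $\KL(\mathrm{Con}(\bm\beta_j,T):\mathrm{Con}(\bm\alpha_j,T))$ by the per-coordinate discrete KL between these two softmax pmfs, and summing over $j$ and expanding the $\log$ of a ratio reproduces exactly the right-hand side of \cref{eq:monotonicity}. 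The only delicate point in the whole argument is in \ding{192}: the fibers of $\bm{y}_j\mapsto z_j$ are lower-dimensional and thus null in $\Delta^{R-1}$, so the ``partition into positive-volume subregions'' wording does not literally apply and one must appeal to the general measurable-map form of monotonicity; part \ding{193}, where the coarse graining is genuinely into finitely many full-dimensional cells, is routine once \cref{eq:gumbelmax} is invoked.
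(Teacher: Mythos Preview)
Your proposal is correct and follows essentially the same route as the paper: for \ding{192} both you and the paper coarse grain $\Delta^{R-1}$ by the level sets of $\bm{y}_j\mapsto\bm{y}_j^\intercal\bm\zeta$ and invoke information monotonicity, and for \ding{193} both use the winner-take-all cells (the paper phrases them as the Voronoi cells of the vertices $\bm{e}_r$, which on the simplex coincide with your $\{y_{jr}\ge y_{jo}\}$) together with \cref{eq:gumbelmax}. Your explicit factorization over $j$ and the chain-rule justification for the continuum coarse graining in \ding{192} are welcome clarifications that the paper's proof leaves implicit.
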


By \cref{thm:mono}, the KL between two Concrete distributions are lower bounded by
\ding{192} KL between the dimension reduced $\bm{z}$ (the exact value of $term_1$);
\ding{193} KL between the corresponding category distributions (our approximation of $term_1$).
If one uses Concrete latent variable and uses the category KL as $term_1$
(e.g. in Category VAE, see \cref{fig:cvae}), this is equivalent to minimizing
a \emph{lower bound} of the free energy, which is not ideal because such learning
has less control over the free energy. In contrast, CG-BPEF-VAE has a 
reconstruction layer $\bm{y}\to\bm{z}$ (see \cref{fig:cgvae}), which reduces
the number of dimensions by a factor of $R$ (e.g. in our experiments $R\approx100$).
By \cref{thm:mono} \ding{192}, this effectively reduces the KL divergence between
the latent posterior and the latent prior. Intuitively, we can expect $term_1$
to be much smaller than $\KL(q(\bm{y}\,\vert\,\bm{x},\bm\varphi):p(\bm{y}\,\vert\,\bm{a}))$
and by minimizing the category KL, we have more faith to bring down $term_1$ 
rather than $\KL(q(\bm{y}\,\vert\,\bm{x},\bm\varphi):p(\bm{y}\,\vert\,\bm{a}))$.

\emph{How good is our approximation} in \cref{eq:monotonicity}? Unfortunately we do not
have theoretically guaranteed bounds. Therefore we fall back to an empirical study.
We generate category samples $\bm\alpha\in\Delta^{99}$, then generate the corresponding Gumbel
distribution $\bm{y}$, then reduce the dimensionality by $z=\bm{y}^\intercal \bm\zeta$.
\Cref{fig:bound} shows the $\KL(\bm\alpha:\mathrm{Uniform})$ (our approximation)
and $\KL(p(\bm{z}):\mathrm{Uniform})$ (the true latent KL).
We repeat 100 experiments for each of two different $\bm\alpha$ generator:
a high entropy uniform generator over $\Delta^{99}$,
and a low entropy generator based on a Dirichlet distribution with shape parameter $\alpha=0.5$.
(In practice we expect a low entropy posterior which is close to the latter case).
The results suggest that our approximation is roughly an upper bound of the true KL divergence
between latent distributions on small temperatures. Therefore we can expect
that minimizing $\mathcal{L}(\bm\theta,\bm\varphi)$ based on
\cref{eq:monotonicity} will bring down the free energy.
See the appendix for more empirical study.
A theoretical analysis is left to future work.





Essentially $term_1$ serves as a regularizor, constraining $p(\bm{z}\,\vert\,\bm{x},\bm\varphi)$ 
to have enough entropy to respect a common $p(\bm{z})$ that does not vary with different samples.
An approximated $term_1$ is acceptable in many cases, because
one can add a regularization strength parameter to tune the model (e.g. based on validation).


\begin{figure}[t]
\centering
\includegraphics[width=.75\textwidth]{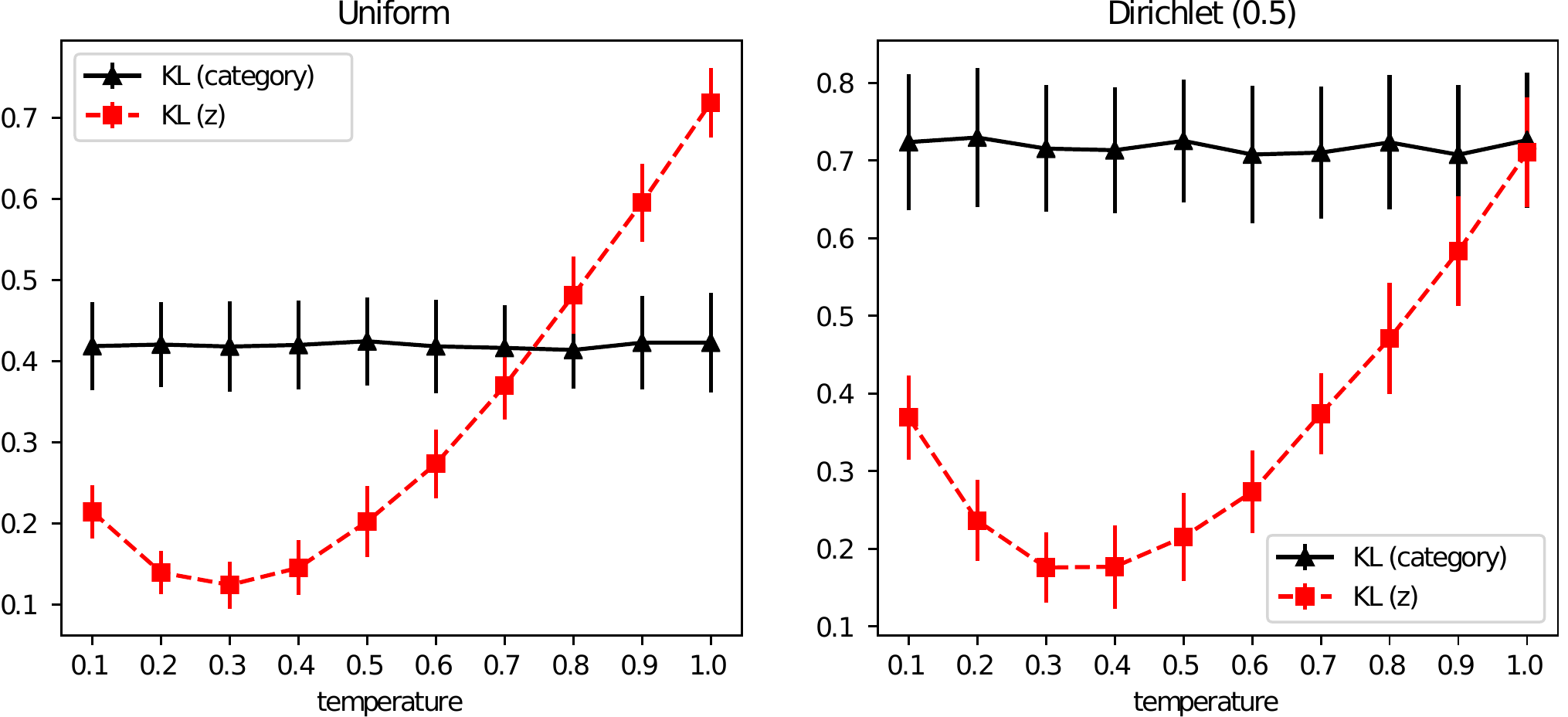}
\caption{KL divergence between category distributions v.s. KL divergence
between corresponding Gumbel distributions ($\bm{y}$) after dimensionality reduction ($\bm{z}$).
The figures show mean$\pm$standard deviation of KL. Both of KL divergences are computed by
discretizing $(-1,1)$ into $R=100$ intervals.}\label{fig:bound}
\end{figure}

\section{Experimental Results}\label{sec:exp}

\begin{table*}[t]
\scriptsize
\centering
\caption{Training and testing errors (estimated variational bound) on the MNIST dataset and the corresponding model configuration.
``iconv'' consists 5 convolutional layers; ``oconv'' consists of one
RELU layer and 4 transposed convolutional layers.}\label{tbl:mnist}
\begin{tabular}{l|r@{~~}l|ccccccc}
\hline
&\multicolumn{2}{c|}{Model}    & $term_2$ & $\mathcal{L}$ & $\gamma$ & network shape & $C$ & $M$ & $R$ \\
\hline
\multirow{5}{*}{$L=1$} &
Gauss-VAE
 &       &  80.2 & 101.3 & 1e-3 & 784-400-400-20-400-400-784 & $-$ & $-$ & $-$\\\cline{2-10}
&\multirow{2}{*}{Cat-VAE}
 & T=0.5 & 86.5 & 105.8 & 1e-3 & 784-400-400-20-400-400-784 &  5  & $-$ & $-$\\
&& T=0.8 & 84.2 & 101.5 & 1e-3 & 784-400-400-20-400-400-784 & 10 & $-$ & $-$ \\\cline{2-10}
&\multirow{2}{*}{CG-BPEF-VAE}
 & T=0.5 & 77.8 & $\bm{97.4}$ & 1e-3 & 784-400-400-20-400-400-784 & $-$ &  5 & 101 \\
&& T=0.8 & 75.9 & $\bm{92.8}$ & 1e-3 & 784-400-400-30-400-400-784 & $-$ &  5 & 101 \\\hline
\multirow{5}{*}{$L=10$} &
Gauss-VAE
 && 78.5 & 100.0 & 1e-3 & 784-400-400-20-400-400-784 & $-$ & $-$ & $-$\\\cline{2-10}
&\multirow{2}{*}{Cat-VAE}
 & T=0.5 & 78.8 & 98.2 & 1e-3 & 784-400-400-20-400-400-784 & 15 & $-$ & $-$ \\
&& T=0.8 & 76.9 & 94.2 & 1e-3 & 784-400-400-20-400-400-784 & 20 & $-$ & $-$ \\\cline{2-10}
&\multirow{2}{*}{CG-BPEF-VAE}
 & T=0.5 & 74.8 & $\bm{94.7}$ & 1e-3 & 784-400-400-60-400-400-784 & $-$ & 15 & 101 \\
&& T=0.8 & 73.2 & $\bm{90.4}$ & 1e-3 & 784-400-400-40-400-400-784 & $-$ & 5 & 101 \\
\hline
\end{tabular}
\subcaption{MNIST}
\begin{tabular}{l|r@{~~}l|ccccccc}
\hline
&\multicolumn{2}{c|}{Model}     & $term_2$ & $\mathcal{L}$ & $\gamma$ & network shape & $C$ & $M$ & $R$ \\
\hline
\multirow{5}{*}{$L=1$} &
Gauss-VAE
 &       &  616.3 & 624.7 & 1e-3 & 1024-iconv-128-30-128-oconv-1024 & $-$ & $-$ & $-$\\\cline{2-10}
&\multirow{2}{*}{Cat-VAE}
 & T=0.5 & 619.5 & 626.2 & 1e-3 & 1024-iconv-128-20-128-oconv-1024 & 10 & $-$ & $-$\\
&& T=0.8 & 617.6 & 623.7 & 1e-3 & 1024-iconv-128-20-128-oconv-1024 & 10 & $-$ & $-$\\\cline{2-10}
&\multirow{2}{*}{CG-BPEF-VAE}
 & T=0.5 & 615.4 & $\bm{622.7}$ & 1e-3 & 1024-iconv-128-20-128-oconv-1024 & $-$ & 15 & 101 \\
&& T=0.8 & 613.4 & $\bm{620.3}$ & 1e-3 & 1024-iconv-128-20-128-oconv-1024 & $-$ &  5 & 101 \\
\hline
\multirow{5}{*}{$L=10$} &
Gauss-VAE
& & 615.7 & 624.4 & 1e-3 & 1024-iconv-128-10-128-oconv-1024 & $-$ & $-$ & $-$\\\cline{2-10}
&\multirow{2}{*}{Cat-VAE}
 & T=0.5 & 616.6 & 623.2 & 1e-3 & 1024-iconv-128-20-128-oconv-1024 & 15 & $-$ & $-$ \\
&& T=0.8 & 615.1 & 620.4 & 1e-3 & 1024-iconv-128-20-128-oconv-1024 & 20 & $-$ & $-$ \\\cline{2-10}
&\multirow{2}{*}{CG-BPEF-VAE}
 & T=0.5 & 613.4 & $\bm{621.6}$ & 1e-3 & 1024-iconv-128-20-128-oconv-1024 & $-$ & 10 & 101 \\
&& T=0.8 & 611.7 & $\bm{619.0}$ & 1e-3 & 1024-iconv-128-30-128-oconv-1024 & $-$ & 5 & 101 \\
\hline
\end{tabular}
\subcaption{SVHN}
\end{table*}

We implemented the proposed method using TensorFlow~\cite{tensorflow}
and tested it on two different datasets. The MNIST dataset~\cite{mnist} consists of 70,000 gray scale images
of hand-written digits, each of size $28\times28$. The training/validation/testing
sets are split according to the ratio $11:1:2$. The SVHN dataset~\cite{svhn} has around
100,000 gray-scale pictures (for simplicity the original $32\times32\times3$ RGB images are reduced into 
$32\times32\times1$ by averaging the 3 channels) of door numbers with a train/valid/test split of $10:1:3.5$.
These pictures are centered by cropping from real street view images.

We only investigate unsupervised density estimation. It is nevertheless
meaningful to have unsupervised VAE results on the selected datasets for future references.
We compare the proposed CG-BPEF-VAE with vanilla VAE (Gauss-VAE) and Category VAE (Cat-VAE)~\cite{jang17}.
For MNIST, the candidate network shapes are
784-400-(10,20,$\cdots$,80)-400-784 and 784-400-400-(10,20,$\cdots$,80)-400-400-784,
equipped with densely connected layers and RELU activations~\cite{nair10}.
For SVHN, the encoder network has 5 convolutional layers with fixed size,
reducing the images into a 128-dimensional feature space, and a bottleneck
layer of size (10,20,30,40,50) (5 different configurations). The decoder network
has one RELU layer of size 128, followed by 4 transposed convolutional layers
with fixed size. See the appendix for the detailed configurations.

The learning rate is $\gamma\in\{10^{-4}, 5\times{}10^{-4}, 10^{-3}, 5\times10^{-3}\}$.
For Cat-VAE and CG-BPEF-VAE, the initial and final temperature are 
$T_{\max}\in\{1,T_{\min}\}$ and $T_{\min}\in\{0.5,0.8\}$, respectively,
 with a simple exponential annealing scheme.
For Cat-VAE, the number of categories is $C\in\{5,10,15,20\}$.
For CG-BPEF-VAE, we set the polynomial order $M\in\{5,10,15,20\}$, and the precision $R\in\{51, 101\}$.
The mini-batch size is fixed to 100. The maximum number of mini-batch iterations is 10,000.
For all methods we adopt the Adam optimizer~\cite{adam} and the
Xavier initialization~\cite{xavier}, which are commonly recognized to bring improvements.

The performance is measured by the per-sample average free energy $\mathcal{L}(\bm\varphi,\bm\theta)$.
The best model with the smallest validated $\mathcal{L}$ is selected.
Then the we report its $\mathcal{L}$ on the testing set, along with the reconstruction error $term_2$
so that one can tell its trade-off between model complexity ($term_1$) and fitness to the data ($term_2$).
See \cref{tbl:mnist} for the results
on two different latent sample size $L$ and two different temperatures $T_{\min}$.

We clearly see that CG-BPEF-VAE shows the best results.
Essentially, Gauss-VAE can be considered as a special case of CG-BEPF-VAE
when $M=2$ therefore cannot model higher order moments. Cat-VAE has neither a polynomial
exponential structure to regulate the discrete variables, nor
a dimensionality reduction layer to reduce the free energy.
The good results of CG-BPEF-VAE are expected.

Notice that as we increase the final temperature $T_{\min}$, both Cat-VAE and CG-BPEF-VAE
will show ``better'' results. However, the estimation of the free energy 
will become more and more inaccurate especially for Cat-VAE, whose estimation
is a \emph{lower bound} of the actual free energy by \cref{thm:mono} (2).
In high temperature, the free energy can be well above its reported $\mathcal{L}$.
In contrast, for CG-BPEF-VAE, its estimated $\mathcal{L}$ is an \emph{empirical upper bound}
of the free energy, as long as $T_{\min}$ is set reasonably small ($T=0.5\sim1$, see \cref{fig:bound}).

All models prefer deep architectures over shallow ones. There is a significant improvement
of Cat-VAE and CG-BPEF-VAE when $L$ is increased from 1 to 10, when Cat-VAE
starts to prefer larger category numbers. A large sample size $L$ is required to model complex
multimodal distributions and is recommended for Cat-VAE and CG-BPEF-VAE.
As the size of the decoder network scales linearly with $L$, one will face
significantly higher computation cost during increasing $L$.

As compared to MNIST, SVHN is more difficult to get improved over
the baseline results by Gauss-VAE, because its data manifold
is much more complex. One has to incorporate supervised
information~\cite{kingma14} to achieve better results.

Cat-VAE and CG-BPEF-VAE are more computational costly than Gauss-VAE.
In Cat-VAE, the tensor $\bm{z}$ has a size of 
$\mathtt{batch~size}\times{L}\times{d}\times{}C$.
In CG-BPEF-VAE, the tensor $\bm{y}$
have a size of $\mathtt{batch~size}\times{L}\times{d}\times{}R$, although
this is immediately reduced to $\mathtt{batch~size}\times{L}\times{d}$ by the mapping $\bm{y}\to\bm{z}$.
A high precision CG-BPEF-VAE or a Cat-VAE with a large category number will multiply the computational time.
Our implementation is available at \url{https://github.com/sunk/cgvae}.

\section{Information Geometry of VAE}\label{sec:ig}

This is a relatively separate section. We present a geometric theory which can
be useful to uncover the intrinsics of \emph{general} VAE modeling not limited
to the proposed CG-BPEF-VAE, so that one can architect useful VAE models not
only based on variational inference, but also along another geometric axis. 
We also use this geometry to discuss advantages of the proposed CG-BPEF-VAE.

Notice, this geometry
is not about the input feature space or the latent space (space of $\bm{x}$
and $\bm{z}$), but about the models (space of $\bm\theta$ and $\bm\varphi$)
or information geometry~\cite{amari16}.

We will consider the cost function $\calL(\bm\theta,\bm\varphi)$
averaged with respect to i.i.d. observations $\{\bm{x}^k\}_{k=1}^n$. $term_1$ is the average
KL divergence between $q(\bm{z}\,\vert\,\bm{x}^k)$ and $p(\bm{z})$. 
Assume that both $p(\bm{z})$ and $q(\bm{z}\,\vert\,\bm{x}^k)$ are in the
same exponential family $\mathcal{M}(\bm\varphi)$
so that $p(\bm{z})=\exp(\bm{t}^\intercal(\bm{z})\bm\varphi^z - F(\bm\varphi^z))$
and $q(\bm{z}\,\vert\,\bm{x}^k)= \exp(\bm{t}^\intercal(\bm{z})\bm\varphi^k - F(\bm\varphi^k))$,
where $\bm{t}(\bm{z})$ is a vector of sufficient statistics
(for example in CG-BPEF-VAE, $\bm{t}(\bm{z})=(z,z^2,z^3,\cdots)$), 
and $F(\bm\varphi)$ is a convex cumulant generating function\footnote{
In this section, we will denote $p(\bm{z}\,\vert\,\bm\varphi^z)$ instead of 
$p(\bm{z}\,\vert\,\bm\theta^z)$ (as in previous sections) to emphasize that 
$p(\bm{z})$ is in the same statistical manifold with $q(\bm{z}\,\vert\,\bm{x}^k)$.}.
This $\mathcal{M}(\bm\varphi)$ is a \emph{statistical manifold}, i.e., space
of probability distributions where $\bm\varphi$ serves as a coordinate system.
The dual parameters~\cite{amari16} of $\mathcal{M}(\bm\varphi)$, which form
another coordinate system, are defined by the moments $\bm\eta=E(\bm{t}(\bm{z}))=\int p(\bm{z})\bm{t}(\bm{z})\mathrm{d}\bm{z}$.
These two coordinate systems can be transformed back and forth by the Legendre transformations
$\bm\eta=F'(\bm\varphi)$, $\bm\varphi=\calI'(\bm\eta)$, where $\calI$ is Shannon's information (negative entropy).

By straightforward derivations, 
$$
term_1=\frac{1}{n}\sum_{k=1}^n\left[ \calI(\bm\eta^k) - 
(\bm\eta^k)^\intercal \bm\varphi^z \right] + F(\bm\varphi^z).
$$
Notice that the prior $p(\bm{z})$ only appears in $term_1$ but not in $term_2$. We therefore consider a free
$p(\bm{z})$ which minimizes $term_1$ with $\{\bm\varphi^k\}_{k=1}^n$ fixed. We have
$$
\frac{\partial{}term_1}{\partial\bm\varphi^z}
= -\frac{1}{n}\sum_{k=1}^n\bm\eta^k + \frac{\partial F(\bm\varphi^z)}{\partial\bm\varphi^z}
= \bm\eta^z -\frac{1}{n}\sum_{k=1}^n\bm\eta^k.
$$
Therefore the optimal $(\bm\eta^z)^\star=\frac{1}{n}\sum_{k=1}^n\bm\eta^k$
is the Bregman centroid~\cite{nielsen09} of $\{\bm\varphi^k\}_{k=1}^n$.
Geometrically, $term_1$ is the average divergence between $\bm\varphi^k$
and the Bregman centroid and therefore measures the $n$-body compactness
of $\{\bm\varphi^k\}_{k=1}^n$. We can therefore have a lower bound of $term_1$.
\begin{theorem}\label{thm:lb}
Given $q(\bm{z}\,\vert\,\bm{x}^k)$ in an exponential family $\mathcal{M}(\bm\varphi)$,
if $p(\bm{z})$ is in the same exponential family, then
\begin{equation}\label{eq:parabound}
term_1 \ge
\frac{1}{n}\sum_{k=1}^n\calI\left(\bm\eta^k\right) -
\calI\left(\frac{1}{n}\sum_{k=1}^n\bm\eta^k\right)\ge0 ,
\end{equation}
where the first ``='' holds if and only if $\bm\eta^z=\frac{1}{n}\sum_{k=1}^n\bm\eta^k$.
If $p(\bm{z})$ is non-parametric (not constrained by any parametric structure), then
\begin{equation}\label{eq:nonparabound}
term_1 \ge \frac{1}{n}\sum_{k=1}^n \calI(\bm\eta^k) - \calI(m) \ge 0,
\end{equation}
where $m(\bm{z})=\frac{1}{n}\sum_{k=1}^n q(\bm{z}\,\vert\,\bm{x}^k)$
is a mixture model which is outside $\mathcal{M}(\bm\varphi)$.
\end{theorem}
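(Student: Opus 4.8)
The plan is to work directly from the identity
$$term_1=\frac{1}{n}\sum_{k=1}^n\left[\calI(\bm\eta^k)-(\bm\eta^k)^\intercal\bm\varphi^z\right]+F(\bm\varphi^z)$$
established just before the statement, exploiting throughout that $F$ and $\calI$ form a Legendre-dual pair of strictly convex functions, so that $\bm\eta=F'(\bm\varphi)$, $\bm\varphi=\calI'(\bm\eta)$, and $F(\bm\varphi)+\calI(\bm\eta)=\bm\eta^\intercal\bm\varphi$ whenever $\bm\eta$ and $\bm\varphi$ are dual coordinates of the same point.

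For the parametric bound: since $F$ is convex, the displayed expression is convex in $\bm\varphi^z$, so the stationarity condition $\partial term_1/\partial\bm\varphi^z=\bm\eta^z-\frac1n\sum_k\bm\eta^k=0$ already computed in the text pins down the \emph{global} minimiser, namely the distribution whose dual coordinate is $\bar{\bm\eta}:=\frac1n\sum_k\bm\eta^k$. Substituting $\bm\eta^z=\bar{\bm\eta}$, hence $\bm\varphi^z=\calI'(\bar{\bm\eta})$, into the identity and using $F(\bm\varphi^z)=\bar{\bm\eta}^\intercal\bm\varphi^z-\calI(\bar{\bm\eta})$, the two copies of $\bar{\bm\eta}^\intercal\bm\varphi^z$ cancel and the minimum collapses to $\frac1n\sum_k\calI(\bm\eta^k)-\calI(\bar{\bm\eta})$; strict convexity of $F$ yields the stated equality condition $\bm\eta^z=\bar{\bm\eta}$. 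The residual inequality $\frac1n\sum_k\calI(\bm\eta^k)-\calI(\bar{\bm\eta})\ge0$ is then Jensen's inequality for the convex function $\calI$ (equivalently, it is the value of $term_1$ at the optimal prior, which is an average of KL divergences).

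For the non-parametric bound I would instead manipulate $term_1=\frac1n\sum_k\KL(q(\bm z\,\vert\,\bm x^k):p(\bm z))$ directly via the pointwise splitting $\log\frac{q_k}{p}=\log\frac{q_k}{m}+\log\frac{m}{p}$ with $m=\frac1n\sum_k q_k$. Averaging over $k$ and integrating turns the cross term into $\int m\log\frac{m}{p}=\KL(m:p)\ge0$, so $term_1=\frac1n\sum_k\KL(q_k:m)+\KL(m:p)\ge\frac1n\sum_k\KL(q_k:m)$, with equality iff $p=m$ (which is generally impossible inside $\mathcal{M}(\bm\varphi)$, so this bound is strictly weaker than the parametric one; indeed $\calI(m)\ge\calI(\bar{\bm\eta})$ because $m$ and the exponential-family member carrying moments $\bar{\bm\eta}$ share those moments and the latter is the maximum-entropy density among all such). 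Finally $\frac1n\sum_k\KL(q_k:m)=\frac1n\sum_k\int q_k\log q_k-\int m\log m$; the first sum equals $\frac1n\sum_k\calI(\bm\eta^k)$ since $\int q_k\log q_k=(\bm\eta^k)^\intercal\bm\varphi^k-F(\bm\varphi^k)=\calI(\bm\eta^k)$ for an exponential-family density, and $\int m\log m=\calI(m)$ by definition, giving $term_1\ge\frac1n\sum_k\calI(\bm\eta^k)-\calI(m)\ge0$, the last step again because an average of KL divergences is nonnegative.

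The calculations are routine; the only places needing genuine care are (i) the Legendre bookkeeping in the parametric case — verifying that the cross terms cancel exactly and that the unique stationary point is the minimiser rather than a saddle, which follows from strict convexity of $F$ — and (ii) in the non-parametric case, justifying the interchange of the finite sum with the integral in the chain-rule (compensation) identity for mixtures, which is the standard chain rule for relative entropy and is harmless once $p$ and $m$ are fixed.
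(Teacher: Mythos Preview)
Your proposal is correct and follows essentially the same approach as the paper: for the parametric bound you minimise over $\bm\varphi^z$ using the convexity of $F$ and then invoke the Legendre identity $F(\bm\varphi)+\calI(\bm\eta)=\bm\eta^\intercal\bm\varphi$ (which the paper isolates as a separate lemma), and for the non-parametric bound your compensation identity $term_1=\frac1n\sum_k\KL(q_k:m)+\KL(m:p)$ is just the paper's cross-entropy minimisation $-\int m\log p\ge -\int m\log m$ rewritten. Your extra remark comparing the two bounds via $\calI(m)\ge\calI(\bar{\bm\eta})$ is a nice addition not present in the paper's proof.
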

Comparatively, the non-parametric lower bound \cref{eq:nonparabound} is smaller than the parametric bound \cref{eq:parabound}.
However it needs to compute the entropy of mixture models~\cite{klmm}, which does not have an analytic solution.
Essentially, $term_1$ is related to the convexity of Shannon information.
In standard VAE, $p(\bm{z})$ is fixed to the standard Gaussian distribution,
which is not guaranteed to be the Bregman centroid, and does not activate
the lower bound in \cref{thm:lb}. In CG-BPEF-VAE, $term_1$ is closer to this 
bound because $p(\bm{z})$ is set free in our modeling. This hints that
as a future work one can directly replace $term_1$ with the lower bound stated in \cref{thm:lb}
to avoid the model selection of $p(\bm{z})$ and to achieve better performance.

Let $\bm\mu^k=\bm\mu(\bm{x}^k,\bm\varphi)$ and $\bm{V}^k=\bm{V}(\bm{x}^k,\bm\varphi)\succ0$
be the mean and covariance matrix of $q(\bm{z}\,\vert\,\bm{x}^k,\bm\varphi)$, respectively.
A Taylor expansion of $\log p(\bm{x}^k\,\vert\,\bm{z},\bm\theta)$ at $\bm{z}=\bm\mu^k$ gives 
\begin{small}
\begin{align*}
&term_2
\approx \frac{1}{n} \sum_{k=1}^n
\int q(\bm{z}\,\vert\,\bm{x}^k,\bm\varphi)
\bigg[ -\log p(\bm{x}^k\,\vert\,\bm\mu^k,\bm\theta) \nonumber\\
&
- (\bm{z}-\bm\mu^k)^\intercal
\frac{\log p(\bm{x}^k\,\vert\,\bm{z},\bm\theta)}{\partial\bm{z}}
+\frac{1}{2}(\bm{z}-\bm\mu^k)^\intercal \fim_{\bm\theta}(\bm\mu^k) (\bm{z}-\bm\mu^k)
\bigg] d\bm{z}\nonumber\\
&=
\frac{1}{n} \sum_{k=1}^n \bigg[
- \log p(\bm{x}^k\,\vert\,\bm\mu^k, \bm\theta)
+ \frac{1}{2}tr\left(\fim_{\bm\theta}(\bm\mu^k) \bm{V}^k\right) \bigg],
\end{align*}
\end{small}
where
$$
\fim_{\bm\theta}(\bm\mu^k)= - \frac{\partial^2 \log
p(\bm{x}^k\,\vert\,\bm{z},\bm\theta)}{\partial\bm{z}^2}
\bigg\vert_{\bm{z}=\bm\mu^k}
$$
is the \emph{observed Fisher information metric}
(FIM)\footnote{The FIM is mostly computed for parameters on a statistical manifold
to describe parameter sensitivity. In contrast, we compute the FIM with respect to
the hidden variable $\bm{z}$.}~\cite{amari16} wrt $\bm{z}$
depending on $\bm\theta$.
The approximation is accurate when $q(\bm{z}\,\vert\,\bm{x}^k,\bm\varphi)$
is Gaussian with vanishing centered-moments of order 3 or above.

Assuming the inference network is flexible enough, minimizing $term_2$ alone gives
$\bm\mu^k=(\bm{z}^k)^\star$, $\bm{V}^k=\bm0$,
where $(\bm{z}^k)^\star=\argmax_{\bm{z}} \log p(\bm{x}^k\,\vert\,\bm{z},\,\bm\theta)$
is the maximum likelihood estimation wrt $\bm{x}^k$.
This $(\bm{z}^k)^\star$ is the latent $\bm{z}$ learned by a plain autoencoder.
Hence $term_2$ measures a dissimilarity between $q(\bm{z}\,\vert\,\bm{x}^k,\bm\varphi)$
and the Dirac delta distribution $\delta((\bm{z}^k)^\star)$.
By \cref{thm:lb}, we get the following approximation of the variational bound.
\begin{corollary}\label{thm:col}
Assume the inference network is flexible enough.
Consider a variation of Gaussian VAE, where both $p(\bm{z})$ and
$q(\bm{z}\,\vert\,\bm{x}^k)$ are free Gaussian distributions.
The optimal $\mathcal{L}^\star$ is given by
\begin{small}
\begin{align*}
\mathcal{L}^\star = &\min_{\{\bm\mu^k,\bm{V}^k,\bm\theta\}}
\frac{1}{n} \sum_{k=1}^n
\bigg[-\log p(\bm{x}^k\,\vert\,\bm\mu^k, \bm\theta)
+\frac{1}{2}tr\left(\fim_{\bm\theta}(\bm\mu^k) \bm{V}^k\right)\nonumber\\
&-\frac{1}{2}\log\vert\bm{V}^k\vert \bigg] +
\frac{1}{2}\log\left\vert
\overline{\bm{V}^k} + \overline{\bm\mu^k  (\bm\mu^k)^\intercal} -
\overline{\bm\mu^k}\;(\overline{\bm\mu^k})^\intercal \right\vert,
\end{align*}
\end{small}
where ``$\overline{\,\cdot\,}$'' means averaging over $k=1,\cdots,n$.
\end{corollary}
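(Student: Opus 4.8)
The plan is to use that $\calL=term_1+term_2$ and that the prior $p(\bm{z})$ occurs only in $term_1$, so it can be eliminated first. Fixing the inference outputs $\{\bm\mu^k,\bm{V}^k\}$ (equivalently the dual coordinates $\{\bm\eta^k\}$), minimizing $term_1$ over the free Gaussian $p(\bm{z})$ is exactly the situation analyzed right before \cref{thm:lb}: the minimizer is the Gaussian whose dual coordinates equal $\frac1n\sum_k\bm\eta^k$, and for it \cref{eq:parabound} holds with equality, so $\min_{p(\bm{z})}term_1=\frac1n\sum_k\calI(\bm\eta^k)-\calI\!\big(\frac1n\sum_k\bm\eta^k\big)=:term_1^\star$. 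Hence it only remains to (i) write $term_1^\star$ out explicitly for the Gaussian family and (ii) substitute the second-order expansion of $term_2$ derived just above the corollary.

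For (i) I would recall that a $d$-dimensional Gaussian with mean $\bm\mu$ and covariance $\bm\Sigma$ has negative entropy $\calI=-\tfrac12\log\!\big((2\pi e)^d|\bm\Sigma|\big)$, and that in the dual chart the moment parameter is $\bm\eta=(\bm\mu,\bm S)$ with $\bm S=\bm\Sigma+\bm\mu\bm\mu^\intercal$ the second moment, so $\calI(\bm\eta)=-\tfrac12\log\!\big((2\pi e)^d|\bm S-\bm\mu\bm\mu^\intercal|\big)$. For $q(\bm z\,\vert\,\bm x^k)$ this gives $\bm\eta^k=(\bm\mu^k,\ \bm V^k+\bm\mu^k(\bm\mu^k)^\intercal)$, whence $\frac1n\sum_k\bm\eta^k$ corresponds to the Gaussian with mean $\overline{\bm\mu^k}$ and covariance $\overline{\bm V^k}+\overline{\bm\mu^k(\bm\mu^k)^\intercal}-\overline{\bm\mu^k}\,(\overline{\bm\mu^k})^\intercal$. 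Substituting into $term_1^\star$, the two $\tfrac d2\log(2\pi e)$ constants cancel and one is left with $term_1^\star=-\tfrac12\,\overline{\log|\bm V^k|}+\tfrac12\log\big|\,\overline{\bm V^k}+\overline{\bm\mu^k(\bm\mu^k)^\intercal}-\overline{\bm\mu^k}\,(\overline{\bm\mu^k})^\intercal\,\big|$.

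For (ii), the paragraph before the corollary already gives $term_2\approx\frac1n\sum_k\big[-\log p(\bm x^k\,\vert\,\bm\mu^k,\bm\theta)+\tfrac12\,\mathrm{tr}(\fim_{\bm\theta}(\bm\mu^k)\bm V^k)\big]$, which the discussion there shows is exact for Gaussian $q$ up to the stated order. Since the inference network is assumed flexible enough to output an arbitrary collection $\{\bm\mu^k,\bm V^k\}$, minimizing $\calL=term_1^\star+term_2$ over $(\bm\varphi,\bm\theta)$ becomes a minimization over $\{\bm\mu^k,\bm V^k,\bm\theta\}$; adding the two pieces yields exactly the displayed $\calL^\star$.

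The hard part will be the equality in \cref{eq:parabound}: one must check that $\frac1n\sum_k\bm\eta^k$ is realizable by an actual Gaussian, i.e.\ that $\overline{\bm V^k}+\overline{\bm\mu^k(\bm\mu^k)^\intercal}-\overline{\bm\mu^k}(\overline{\bm\mu^k})^\intercal\succeq 0$. This is immediate because that matrix is the covariance of the mixture $m(\bm z)=\frac1n\sum_k q(\bm z\,\vert\,\bm x^k)$, so the moment-matched Gaussian prior exists and the equality case of \cref{thm:lb} applies; the remaining care is bookkeeping — keeping $\calI$ consistently in the moment chart so the additive constants cancel, and reading ``$\calL^\star$ is given by'' in the same approximate sense as the $term_2$ expansion it relies on.
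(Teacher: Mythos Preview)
Your proposal is correct and follows exactly the route the paper intends: the paper does not give a detailed proof of the corollary but simply says ``By \cref{thm:lb}, we get the following approximation of the variational bound,'' and your argument is precisely the intended unpacking --- optimize out the free Gaussian prior via the equality case of \cref{eq:parabound}, specialize $\calI$ and the dual coordinates to the Gaussian family so the $(2\pi e)^d$ constants cancel, and add the second-order expansion of $term_2$ derived just above the corollary. Your extra check that $\overline{\bm V^k}+\overline{\bm\mu^k(\bm\mu^k)^\intercal}-\overline{\bm\mu^k}\,(\overline{\bm\mu^k})^\intercal\succeq 0$ (as the covariance of the mixture) is a nice detail the paper leaves implicit.
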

\begin{remark}
Consider roughly $\bm{V}^k\approx\bm{V}$ and $\bm{V}^z=\overline{\bm\mu^k (\bm\mu^k)^\intercal} -
\overline{\bm\mu^k}\;(\overline{\bm\mu^k})^\intercal$.
The term $\frac{1}{2}tr\left(\fim_{\bm\theta}(\bm\mu^k)\bm{V}\right)$
helps to shrink $\bm{V}$ towards $\bm{0}$ and lets
$\bm{V}$ respect the data manifold encoded in the spectrum of $\fim_{\bm\theta}(\bm\mu^k)$.
The term 
$-\frac{1}{2}\log\vert\bm{V}\vert
+ \frac{1}{2}\log\left\vert \bm{V} + \bm{V}_z\right\vert$
enlarges $\bm{V}$ and lets $\bm{V}$ respect the latent manifold of $\{\bm\mu^k\}$.
This reveals a fundamental trade-off between fitting the input data and generalizing.
\end{remark}
\begin{remark}
In Gaussian VAE with $L=1$, the term $\frac{1}{2}tr\left(\fim_{\bm\theta}(\bm\mu^k) \bm{V}^k\right)$
is inaccurate. Approximating this term can potentially give more effective implementations of VAE.
\end{remark}
\begin{remark}
Using BPEF for $q(\bm{z}\,\vert\,\bm{x}^k,\bm\varphi)$
has the advantage that the information preserved in higher order differentiations
$\frac{\partial^d\log p(\bm{x}\,\vert\,\bm{z},\,\bm\theta)}{\partial\bm{z}^d}$ ($d\ge3$)
is captured.
\end{remark}

\begin{figure}[t]
\centering
\includegraphics[width=.6\textwidth]{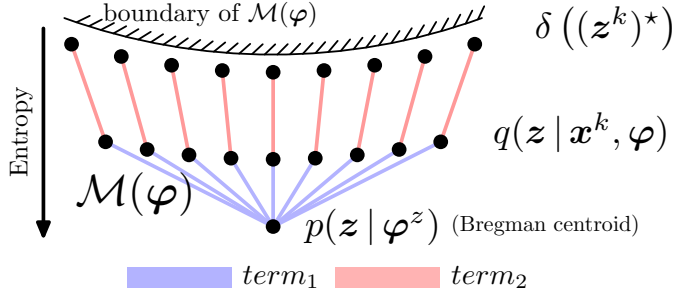}
\caption{A geometric picture of VAE learning. $\mathcal{M}(\bm\varphi)$ is an exponential
family with natural parameters $\bm\varphi$ to which $p(\bm{z})$ and $q(\bm{z}\,\vert\,\bm{x}^k,\bm\varphi)$
both belong. Learning is to minimize the total ``length'' of those colored strings.}\label{fig:ig}
\end{figure}

In summary, \cref{fig:ig} presents the information geometric background of VAE
on the statistical manifold $\mathcal{M}(\bm\varphi)$ which includes $p(\bm{z})$,
$q(\bm{z}\,\vert\,\bm{x}^k)$ and $\delta((\bm{z}^k)^\star)$.
Note that $\delta((\bm{z}^k)^\star)$ is along the boundary of
$\mathcal{M}(\bm\varphi)$, where the variance is 0. For example, a Gaussian
distribution $G(\mu,\sigma)$ with $\sigma\to0$ becomes $\delta(\mu)$.
$\{\delta((\bm{z}^k)^\star)\}$ vary according to maximum
likelihood learning along another statistical manifold $\mathcal{M}(\bm\theta)$.
The cost function $\mathcal{L}$
is interpreted as the geometric compactness of those three sets of distributions.
This is essentially related to the theory of minimum description length~\cite{hinton94,sun15}.

$\fim_{\bm\theta}(\bm{z})$ also gives a lower bound (Cram\'er-Rao bound, \citealt{cramer46})
on the variance of $\bm{z}$, which is given by $\fim_{\bm\theta}^{-1}(\bm{z})$.
In other words, \emph{there is a minimum precision} (or maximum accuracy) that one can achieve
in the inference of $\bm{z}$ given $\bm{x}^k$. Although it is hard to compute exactly, it is 
important to realize the existence of this bound. We give the
following rough estimation. Because each $\bm{z}$ has only single
observation, the FIM of $\bm{z}$ does not scale with $n$. On the other hand,
$\bm\theta$ has $n$ repeated observations. Therefore its FIM 
$\fim(\bm\theta)=\sum_{k=1}^n E_p\left(-\partial^2\log{p}(\bm{x}^k\,\vert\,\bm{z},\bm\theta)/\partial\bm\theta^2\right)$
scales linearly with $n$, meaning that the
precision of $\bm\theta$ scales with $1/\sqrt{n}$. Therefore the precision
of $\bm{z}$ is roughly $\sqrt{n}$ times the precision of $\bm\theta$. Hence
the estimation of $\bm{z}$ should indeed be inaccurate as compared to $\bm\theta$.
This means that the proposed coarse grain technique is not only for
computational convenience, but also has a theoretical background.


\section{Concluding Remarks}\label{sec:con}

Within the variational auto-encoding framework~\cite{vae}, this paper proposed a new method CG-BPEF-VAE.
Among numerous variations of VAE~\cite{burda16,jang17}, CG-BPEF-VAE is featured by
using a universal BPEF density generator in the inference model, and
providing a principled way to simulate continuous densities using discrete
latent variables. For example, to apply another sophisticated distribution
on the latent variable $\bm{z}$, one can employ our CG technique so as to use the reparameterization trick.
This study touches a fundamental problem in unsupervised learning:
\emph{how to build a discrete latent structure to factor information in a continuous representation}?
We provide preliminary results on unsupervised density estimation,
showing performance improvements over the original VAE and category
VAE~\cite{jang17}. An empirical study can be extended to semi-supervised learning. This is ongoing work.

We try to picture an information geometric background of VAE.
Essentially VAE learns on two manifolds $\mathcal{M}(\bm\theta)$
and $\mathcal{M}(\bm\varphi)$, where the cost function can be
geometrically interpreted as a sum of divergences within
a $n$-body system. This potentially leads to new implementations
based on information geometry, e.g., using alternative divergences.


BPEF uses a linear combination of
basis distributions in the $\theta$-coordinates (natural parameters).
Another basic way to define probability distributions is 
mixture modeling, or linear combination in the $\eta$-coordinates (moment parameters).
The coarse grained technique can be extended to a mixture
of BPEF densities, which could more effectively model multi-modal distributions.

\section*{Acknowledgments}

This research is funded by King Abdullah University of Science and Technology.
The experiments are conducted on the Manda cluster provided by Computational
Bioscience Research Center at KAUST.

\bibliographystyle{icml2017}
\bibliography{vae}

\begin{thebibliography}{32}
\providecommand{\natexlab}[1]{#1}
\providecommand{\url}[1]{\texttt{#1}}
\expandafter\ifx\csname urlstyle\endcsname\relax
  \providecommand{\doi}[1]{doi: #1}\else
  \providecommand{\doi}{doi: \begingroup \urlstyle{rm}\Url}\fi

\bibitem[{Abadi, Mart\'{\i}n et al.}(2015)]{tensorflow}
{Abadi, Mart\'{\i}n et al.}
\newblock {TensorFlow}: Large-scale machine learning on heterogeneous systems,
  2015.
\newblock Software available from \url{tensorflow.org}.

\bibitem[Amari(2016)]{amari16}
Amari, {Shun-ichi}.
\newblock \emph{Information Geometry and its Applications}, volume 194 of
  \emph{Applied Mathematical Sciences}.
\newblock Springer Japan, 2016.

\bibitem[Burda et~al.(2016)Burda, Grosse, and Salakhutdinov]{burda16}
Burda, Yuri, Grosse, Roger, and Salakhutdinov, Ruslan.
\newblock Importance weighted autoencoders.
\newblock In \emph{ICLR}, 2016.
\newblock arXiv:1509.00519 [cs.LG].

\bibitem[Cobb et~al.(1983)Cobb, Koppstein, and Chen]{pef}
Cobb, Loren, Koppstein, Peter, and Chen, Neng~Hsin.
\newblock Estimation and moment recursion relations for multimodal
  distributions of the exponential family.
\newblock \emph{Journal of the American Statistical Association}, 78\penalty0
  (381):\penalty0 124--130, 1983.

\bibitem[Cram{\'e}r(1946)]{cramer46}
Cram{\'e}r, Harald.
\newblock \emph{Mathematical Methods of Statistics}, volume~9 of
  \emph{Princeton Mathematical Series}.
\newblock Princeton University Press, 1946.

\bibitem[de~Haan \& Ferreira(2006)de~Haan and Ferreira]{ev}
de~Haan, Laurens and Ferreira, Ana.
\newblock \emph{Extreme Value Theory: An Introduction}.
\newblock Springer Series in Operations Research and Financial Engineering.
  Springer-Verlag New York, 2006.

\bibitem[Dilokthanakul et~al.(2017)Dilokthanakul, Mediano, Garnelo, Lee,
  Salimbeni, Arulkumaran, and Shanahan]{dilokthanakul17}
Dilokthanakul, Nat, Mediano, Pedro~A.M., Garnelo, Marta, Lee, Matthew~C.H.,
  Salimbeni, Hugh, Arulkumaran, Kai, and Shanahan, Murray.
\newblock Deep unsupervised clustering with {G}aussian mixture variational
  autoencoders.
\newblock In \emph{ICLR}, 2017.
\newblock arXiv:1611.02648 [cs.LG].

\bibitem[Glorot \& Bengio(2010)Glorot and Bengio]{xavier}
Glorot, Xavier and Bengio, Yoshua.
\newblock Understanding the difficulty of training deep feedforward neural
  networks.
\newblock In \emph{AISTATS; JMLR W\&CP 9}, pp.\  249--256, 2010.

\bibitem[Gregor et~al.(2015)Gregor, Danihelka, Graves, Rezende, and
  Wierstra]{gregor15}
Gregor, Karol, Danihelka, Ivo, Graves, Alex, Rezende, Danilo~Jimenez, and
  Wierstra, Daan.
\newblock {DRAW}: A recurrent neural network for image generation.
\newblock In \emph{ICML; JMLR W \& CP 37}, pp.\  1462--1471, 2015.

\bibitem[Gumbel(1954)]{gumbel54}
Gumbel, Emil~Julius.
\newblock \emph{Statistical theory of extreme values and some practical
  applications: a series of lectures}.
\newblock Applied mathematics series. U. S. Govt. Print. Office, 1954.

\bibitem[Hinton \& Zemel(1994)Hinton and Zemel]{hinton94}
Hinton, Geoffrey~E and Zemel, Richard~S.
\newblock Autoencoders, minimum description length and {Helmholtz} free energy.
\newblock In \emph{NIPS 6}, pp.\  3--10. Morgan-Kaufmann, 1994.

\bibitem[Jang et~al.(2017)Jang, Gu, and Poole]{jang17}
Jang, Eric, Gu, Shixiang, and Poole, Ben.
\newblock Categorical reparameterization with {G}umbel-softmax.
\newblock In \emph{ICLR}, 2017.
\newblock arXiv:1611.01144 [stat.ML].

\bibitem[Jordan et~al.(1999)Jordan, Ghahramani, Jaakkola, and Saul]{jordan99}
Jordan, Michael~I., Ghahramani, Zoubin, Jaakkola, Tommi~S., and Saul,
  Lawrence~K.
\newblock An introduction to variational methods for graphical models.
\newblock \emph{Machine Learning}, 37\penalty0 (2):\penalty0 183--233, 1999.

\bibitem[Kingma \& Ba(2014)Kingma and Ba]{adam}
Kingma, Diederik~P. and Ba, Jimmy.
\newblock Adam: {A} method for stochastic optimization.
\newblock \emph{CoRR}, abs/1412.6980, 2014.

\bibitem[Kingma \& Welling(2014)Kingma and Welling]{vae}
Kingma, Diederik~P and Welling, Max.
\newblock Auto-encoding variational {B}ayes.
\newblock In \emph{ICLR}, 2014.
\newblock arXiv:1312.6114 [stat.ML].

\bibitem[Kingma et~al.(2014)Kingma, Mohamed, Jimenez~Rezende, and
  Welling]{kingma14}
Kingma, Diederik~P, Mohamed, Shakir, Jimenez~Rezende, Danilo, and Welling, Max.
\newblock Semi-supervised learning with deep generative models.
\newblock In \emph{NIPS 27}, pp.\  3581--3589. Curran Associates, Inc., 2014.

\bibitem[Kuzmin \& Warmuth(2005)Kuzmin and Warmuth]{kuzmin2005}
Kuzmin, Dima and Warmuth, Manfred~K.
\newblock Optimum follow the leader algorithm.
\newblock In \emph{COLT}, pp.\  684--686, 2005.

\bibitem[LeCun et~al.()LeCun, Cortes, and Burges]{mnist}
LeCun, Yann, Cortes, Corinna, and Burges, Christopher~J.C.
\newblock The {MNIST} database of handwritten digits.
\newblock URL \url{http://yann.lecun.com/exdb/mnist/}.

\bibitem[Maddison et~al.(2014)Maddison, Tarlow, and Minka]{maddison14}
Maddison, Chris~J, Tarlow, Daniel, and Minka, Tom.
\newblock A$\ast$ sampling.
\newblock In \emph{NIPS 27}, pp.\  3086--3094. Curran Associates, Inc., 2014.

\bibitem[Maddison et~al.(2016)Maddison, Mnih, and Teh]{maddison16}
Maddison, Chris~J., Mnih, Andriy, and Teh, Yee~Whye.
\newblock The {C}oncrete distribution: {A} continuous relaxation of discrete
  random variables.
\newblock \emph{CoRR}, abs/1611.00712, 2016.
\newblock to be presented in ICLR 2017.

\bibitem[Nair \& Hinton(2010)Nair and Hinton]{nair10}
Nair, V. and Hinton, G.~E.
\newblock Rectified linear units improve restricted {B}oltzmann machines.
\newblock In \emph{ICML}, pp.\  807--814, 2010.

\bibitem[Netzer et~al.(2011)Netzer, Wang, Coates, Bissacco, Wu, and Ng]{svhn}
Netzer, Yuval, Wang, Tao, Coates, Adam, Bissacco, Alessandro, Wu, Bo, and Ng,
  Andrew~Y.
\newblock Reading digits in natural images with unsupervised feature learning.
\newblock In \emph{NIPS Workshop on Deep Learning and Unsupervised Feature
  Learning}, 2011.
\newblock URL \url{http://ufldl.stanford.edu/housenumbers}.

\bibitem[Nielsen \& Nock(2009)Nielsen and Nock]{nielsen09}
Nielsen, Frank and Nock, Richard.
\newblock Sided and symmetrized {B}regman centroids.
\newblock \emph{IEEE Transactions on Information Theory}, 55\penalty0
  (6):\penalty0 2882--2904, 2009.

\bibitem[Nielsen \& Nock(2016)Nielsen and Nock]{patchpef}
Nielsen, Frank and Nock, Richard.
\newblock Patch matching with polynomial exponential families and projective
  divergences.
\newblock In \emph{International Conference on Similarity Search and
  Applications (SISAP)}, pp.\  109--116, 2016.

\bibitem[Nielsen \& Sun(2016)Nielsen and Sun]{klmm}
Nielsen, Frank and Sun, Ke.
\newblock Guaranteed bounds on information-theoretic measures of univariate
  mixtures using piecewise log-sum-exp inequalities.
\newblock \emph{Entropy}, 18\penalty0 (12):\penalty0 442, 2016.

\bibitem[Rezende et~al.(2014)Rezende, Mohamed, and Wierstra]{rezende14}
Rezende, Danilo~Jimenez, Mohamed, Shakir, and Wierstra, Daan.
\newblock Stochastic backpropagation and approximate inference in deep
  generative models.
\newblock In \emph{ICML; JMLR W\&CP 32 (2)}, pp.\  1278--1286, 2014.

\bibitem[Salimans et~al.(2015)Salimans, Kingma, and Welling]{salimans15}
Salimans, Tim, Kingma, Diederik, and Welling, Max.
\newblock Markov chain {M}onte {C}arlo and variational inference: Bridging the
  gap.
\newblock In \emph{ICML; JMLR W\&CP 37}, pp.\  1218--1226, 2015.

\bibitem[Serban et~al.(2016)Serban, II, Pineau, and Courville]{serban17}
Serban, Iulian~Vlad, II, Alexander G.~Ororbia, Pineau, Joelle, and Courville,
  Aaron~C.
\newblock Multi-modal variational encoder-decoders.
\newblock \emph{CoRR}, abs/1612.00377, 2016.

\bibitem[{Shu, Rui et al.}(2016)]{shu16}
{Shu, Rui et al.}
\newblock Stochastic video prediction with conditional density estimation.
\newblock In \emph{ECCV Workshop on Action and Anticipation for Visual
  Learning}, 2016.

\bibitem[Sohn et~al.(2015)Sohn, Lee, and Yan]{sohn15}
Sohn, Kihyuk, Lee, Honglak, and Yan, Xinchen.
\newblock Learning structured output representation using deep conditional
  generative models.
\newblock In \emph{NIPS 28}, pp.\  3483--3491. Curran Associates, Inc., 2015.

\bibitem[Sun et~al.(2015)Sun, Wang, Kalousis, and Marchand-Maillet]{sun15}
Sun, Ke, Wang, Jun, Kalousis, Alexandros, and Marchand-Maillet, St{\'e}phane.
\newblock Information geometry and minimum description length networks.
\newblock In \emph{ICML; JMLR W\&CP 37}, pp.\  49--58, 2015.

\bibitem[Walker et~al.(2016)Walker, Doersch, Gupta, and Hebert]{walker16}
Walker, Jacob, Doersch, Carl, Gupta, Abhinav, and Hebert, Martial.
\newblock An uncertain future: Forecasting from static images using variational
  autoencoders.
\newblock In \emph{ECCV; LNCS 9911}, pp.\  835--851, 2016.

\end{thebibliography}

\appendix
\section{Proof of Theorem 1}

\begin{proof}
We first prove (1). The mapping $z = \bm{y}^\intercal \bm\zeta$
is from the simplex
$$
\Delta^{R-1} = \left\{
\bm{y}\in\Re^{R}\,:\,\sum_{i=1}^R y_i=1;\;\forall{i},y_i\ge0
\right\}
$$
to the line segment [-1,1]. We therefore define the following subset
$$
\mathcal{S}_{z}
=\left\{ \bm{y}\in\Delta^{R-1} \,:\, \bm{y}^\intercal\bm\zeta =z \right\}
\subset\Delta^{R-1},
$$
where $-1\le{z}\le1$. There we have the following partition scheme
$$
\Delta^{R-1} = \biguplus_{z=-1}^{1} \mathcal{S}_z.
$$
Note $\KL(q(\bm{y}\,\vert\,\bm{x},\bm\varphi):p(\bm{y}\,\vert\,\bm{a}))$
is the KL divergence between two distributions on $\Delta^{R-1}$. By
information monotonicity, 
$$
\KL(q(\bm{y}\,\vert\,\bm{x},\bm\varphi)\,:\,p(\bm{y}\,\vert\,\bm{a}))
\ge
\KL( q(\mathcal{S}_z\,\vert\,\bm{x},\bm\varphi)\,:\,p(\mathcal{S}_z\,\vert\,\bm{a}) ).
$$
where $q(\mathcal{S}_z\,\vert\,\bm{x},\bm\varphi)=
\int_{\bm{y}\in\mathcal{S}_z} q(\bm{y}\,\vert\,\bm{x},\bm\varphi) \dy$
and $p(\mathcal{S}_z\,\vert\,\bm{a})=
\int_{\bm{y}\in\mathcal{S}_z} p(\bm{y}\,\vert\,\bm{a}) \dy$
are coarse grained distributions defined on $[-1,1]$, that is,
$q(\bm{z}\,\vert\,\bm{x},\bm\varphi)$ and $p(\bm{z}\,\vert\,\bm{a})$.

To prove (2), we partition $\bm\Delta^{R-1}$ based on a Voronoi diagram.
Let 
\begin{equation*}
V_r = \{ \bm{y}\in\Delta^{R-1} \,:\, \Vert\bm{y}-\bm{e}_r\Vert_2
\le\Vert\bm{y}-\bm{e}_o\Vert, \forall{o}\neq{r} \}.
\end{equation*}
where $\bm{e}_r\in\Delta^{R-1}$ has the $r$'th bit set to 1 and the rest bits
set to 0.
By the basic property of Concrete distribution (Eq.(8) in the paper),
\begin{align*}
\int_{\bm{y}\in{V}_r}q(\bm{y}\,\vert\,\bm{x},\bm\varphi)\dy
&= P(y_r\ge{y}_o, \forall{o}\neq{r}) =
\frac{\exp(\beta_{jr})}{\sum_{r=1}^R \exp(\beta_{jr})},\\
\int_{\bm{y}\in{V}_r}p(\bm{y}\,\vert\,\bm{a})\dy
&=
\frac{\exp(\alpha_{jr})}{\sum_{r=1}^R \exp(\alpha_{jr})}.
\end{align*}
Then (2) follows immediately from information monotonicity.
\end{proof}

\section{Proof of Theorem 2}

\begin{lemma}
Let $p(\bm{z}) = \exp(\bm\varphi^\intercal \bm{t}(\bm{z}) - F(\bm\varphi))$
be a distribution in an exponential family , then we have 
$\calI(\bm\eta) - \bm\eta^\intercal \bm\varphi + F(\bm\varphi)=0$.
\end{lemma}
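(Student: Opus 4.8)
The plan is to unwind the definition of $\calI$ as Shannon's information (the negative differential entropy) of the very distribution $p(\bm{z})$ appearing in the statement, whose natural parameter is $\bm\varphi$ and whose dual/moment parameter is $\bm\eta = E_p[\bm{t}(\bm{z})] = \int p(\bm{z})\bm{t}(\bm{z})\,\mathrm{d}\bm{z}$. Concretely, $\calI(\bm\eta) = \int p(\bm{z})\log p(\bm{z})\,\mathrm{d}\bm{z}$, and I would simply substitute the exponential-family form $\log p(\bm{z}) = \bm\varphi^\intercal \bm{t}(\bm{z}) - F(\bm\varphi)$ into this integral and evaluate it termwise.

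First I would split the integral into two pieces. The term involving $F(\bm\varphi)$ is constant in $\bm{z}$, so it factors out as $-F(\bm\varphi)\int p(\bm{z})\,\mathrm{d}\bm{z} = -F(\bm\varphi)$, using that $p$ is a normalized density. The remaining piece is $\int p(\bm{z})\,\bm\varphi^\intercal \bm{t}(\bm{z})\,\mathrm{d}\bm{z} = \bm\varphi^\intercal \int p(\bm{z})\bm{t}(\bm{z})\,\mathrm{d}\bm{z} = \bm\varphi^\intercal \bm\eta$, by the definition of the moment parameter $\bm\eta$. Adding the two pieces gives $\calI(\bm\eta) = \bm\eta^\intercal\bm\varphi - F(\bm\varphi)$, which rearranges to the claimed identity $\calI(\bm\eta) - \bm\eta^\intercal\bm\varphi + F(\bm\varphi) = 0$.

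I expect essentially no obstacle; the only point deserving care is to be explicit that $\bm\eta$ and $\bm\varphi$ index the \emph{same} distribution, i.e. $\bm\eta = F'(\bm\varphi)$ via the Legendre transformation recalled just before the lemma, so that $\calI(\bm\eta)$ is indeed the negative entropy of the $p$ in the statement. Once that correspondence is fixed, the lemma is just the Fenchel--Young equality for the dual pair $(F,\calI)$ evaluated at matched points, and the short computation above makes it self-contained without invoking convex-duality machinery; I would also remark that differentiating the identity in $\bm\eta$ recovers the inverse transformation $\bm\varphi = \calI'(\bm\eta)$.
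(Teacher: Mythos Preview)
Your proposal is correct and matches the paper's own proof essentially line for line: both substitute $\log p(\bm{z}) = \bm\varphi^\intercal \bm{t}(\bm{z}) - F(\bm\varphi)$ into $\calI(\bm\eta)=\int p(\bm{z})\log p(\bm{z})\,\mathrm{d}\bm{z}$, pull out the constant $-F(\bm\varphi)$ using normalization, and identify the remaining integral with $\bm\varphi^\intercal\bm\eta$. Your additional remarks on the Fenchel--Young interpretation and the inverse map $\bm\varphi=\calI'(\bm\eta)$ are correct context but go beyond what the paper includes.
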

\begin{proof}
By definition,
\begin{align*}
\calI(\bm\eta) &= \int p(\bm{z}) \log p(\bm{z}) \dz 
= \int p(\bm{z}) \left(\bm\varphi^\intercal \bm{t}(\bm{z}) - F(\bm\varphi)\right)
\dz\\
&= \bm\varphi^\intercal \int p(\bm{z}) \bm{t}(\bm{z})\dz - F(\bm\varphi)
= \bm\varphi^\intercal \bm\eta - F(\bm\varphi).
\end{align*}
\end{proof}

\begin{proof}
If both $q(\bm{z}\,\vert\,\bm{x}^k)$ and $p(\bm{z})$ are in the same exponential family,
we have
\begin{align*}
p(\bm{z}) &= \exp\left(
\bm{t}^\intercal(\bm{z})\bm\varphi^z - F(\bm\varphi^z)\right),\\
q(\bm{z}\,\vert\,\bm{x}^k)
&= \exp\left(
\bm{t}^\intercal(\bm{z})\bm\varphi^k - F(\bm\varphi^k)\right).
\end{align*}
Therefore
\begin{align}\label{eq:term1}
term_1 =&
\frac{1}{n} \sum_{k=1}^n \KL\left(
q(\bm{z}\,\vert\,\bm{x}^k,\bm\varphi)\,:\,p(\bm{z}\,\vert\,\bm\varphi^z) \right)\nonumber\\
=&
\frac{1}{n} \sum_{k=1}^n
\int q(\bm{z}\,\vert\,\bm{x}^k,\bm\varphi)
\log\frac{q(\bm{z}\,\vert\,\bm{x}^k,\bm\varphi)}
{p(\bm{z}\,\vert\,\bm\varphi^z)} \dz \nonumber\\
=&
\frac{1}{n}\sum_{k=1}^n \left[ \calI(\bm\eta^k) - 
\int q(\bm{z}\,\vert\,\bm{x}^k,\bm\varphi)
(\bm{t}^\intercal(\bm{z})\bm\varphi^z - F(\bm\varphi^z)) \dz
\right] \nonumber\\
=&
\frac{1}{n}\sum_{k=1}^n\left[ \calI(\bm\eta^k)
- (\bm\eta^k)^\intercal(\bm\varphi^z)  + F(\bm\varphi^z) \right].
\end{align}
Because $F(\bm\varphi^z)$ is convex with respect to $\bm\varphi^z$, 
setting its derivative 
\begin{align*}
\frac{\partial{term_1}}{\partial\bm\varphi^z}
= \frac{1}{n}\sum_{k=1}^n
\left[-\bm\eta^k + \frac{\partial F}{\partial\bm\varphi^z}\right]
= \frac{1}{n}\sum_{k=1}^n
\left[-\bm\eta^k + \bm\eta^z\right]\quad\text{(Legendre transformation)}
\end{align*}
to zero gives the \emph{unique} minimizer of $term_1$:
$$
(\bm\eta^z)^\star = \frac{1}{n}\sum_{k=1}^n\bm\eta^k.
$$
Plugging this into Eq.~\ref{eq:term1}, we get
\begin{align*}
term_1^\star &=
\frac{1}{n}\sum_{k=1}^n\left[ \calI(\bm\eta^k) - 
(\bm\eta^k)^\intercal (\bm\varphi^z)^\star
+ F((\bm\varphi^z)^\star) \right]\nonumber\\
&=
\frac{1}{n}\sum_{k=1}^n\left[ \calI(\bm\eta^k)
- ((\bm\varphi^z)^\star)^\intercal \bm\eta^k
+ ((\bm\varphi^z)^\star)^\intercal (\bm\eta^z)^\star
- \calI((\bm\eta^z)^\star)\right]\quad\text{(by the Lemma)}\nonumber\\
&=
\frac{1}{n}\sum_{k=1}^n\calI(\bm\eta^k)
- ((\bm\varphi^z)^\star)^\intercal \frac{1}{n}\sum_{k=1}^n \bm\eta^k
+ ((\bm\varphi^z)^\star)^\intercal (\bm\eta^z)^\star
- \calI((\bm\eta^z)^\star)\nonumber\\
&=
\frac{1}{n}\sum_{k=1}^n\calI(\bm\eta^k)
- \calI\left( \frac{1}{n}\sum_{k=1}^n \bm\eta^k \right).
\end{align*}
By the above analysis, $term_1 \ge term_1^\star$, and the ``='' holds if and only if 
$\bm\eta^z = (\bm\eta^z)^\star$. The second ``$\ge$'' is straightforward from the fact
that $\calI$ is a convex function in the coordinate system $\bm\eta$.

If $p(\bm{z})$ is non-parametric, then
\begin{align}\label{eq:term12}
term_1 =&
\frac{1}{n} \sum_{k=1}^n
\int q(\bm{z}\,\vert\,\bm{x}^k,\bm\varphi)
\log\frac{q(\bm{z}\,\vert\,\bm{x}^k,\bm\varphi)}
{p(\bm{z}\,\vert\,\bm\varphi^z)} \dz \nonumber\\
=&
\frac{1}{n}\sum_{k=1}^n \left[ \calI(\bm\eta^k) - 
\int q(\bm{z}\,\vert\,\bm{x}^k,\bm\varphi)
\log p(\bm{z}\,\vert\,\bm\varphi^z) \dz \right]
\nonumber\\
=&
\frac{1}{n}\sum_{k=1}^n \calI(\bm\eta^k)
- \int \frac{1}{n}\sum_{k=1}^n
q(\bm{z}\,\vert\,\bm{x}^k,\bm\varphi) \log p(\bm{z}\,\vert\,\bm\varphi^z) \dz.
\end{align}
Therefore, $term_1$ is minimized at 
$p(\bm{z}\,\vert\,(\bm\varphi^z)^\star)
= \frac{1}{n}\sum_{k=1}^n q(\bm{z}\,\vert\,\bm{x}^k,\bm\varphi)$.
Plugging this minimizer into the above Eq.~\ref{eq:term12}, we get
\begin{align*}
term_1^\star = 
\frac{1}{n}\sum_{k=1}^n \calI(\bm\eta^k)
- \int \frac{1}{n}\sum_{k=1}^n q(\bm{z}\,\vert\,\bm{x}^k,\bm\varphi) \log
\left[\frac{1}{n}\sum_{k=1}^n q(\bm{z}\,\vert\,\bm{x}^k,\bm\varphi)\right] \dz.
\end{align*}
Note that the mixture model
$\frac{1}{n}\sum_{k=1}^n q(\bm{z}\,\vert\,\bm{x}^k,\bm\varphi) $
is outside the exponential family $\mathcal{M}(\bm\varphi)$.
\end{proof}

\section{The Effect of the Dimensionality Reduction Layer of CG-BPEF-VAE}

Fig.~\ref{fig:bound1} shows the $\KL(p(\bm{z}):\mathtt{Uniform})$ (KL(z))
and $\KL(\bm\alpha:\mathtt{Uniform})$ (KL(category)) when $\bm\alpha$
is generated by Dirichlet distributions with different configurations.
In all cases, $\KL(\bm\alpha:\mathtt{Uniform})$  is lower bounded by 
$\KL(p(\bm{z}):\mathtt{Uniform})$ for small temperature.
\section{Visualization of the Concrete Distribution}

Fig.~\ref{fig:con1}, Fig.~\ref{fig:con2} and Fig.~\ref{fig:con3} 
show Concrete densities generated by random sampling. For each experiment (sub-figure),
we generate $10^6$ Concrete samples and plot the resulting density.
There are very high density regions near the corner (the red region),
which are cropped so that the visualization is clear.

An interesting observation is that the density will ``leak'' to the simplex faces
if $T$ is small, although in this case the density will concentrate on the corners.
Therefore it may not always be good to choose a small $T$. This is ongoing study.

\section{Details of the Convolutional Layers}

We used convolutional layers on the SVHN dataset.
The encoder is specified by
\begin{itemize}
\item Input: $1\times32\times32$ (RGB is averaged into 1 channel)
\item Convolutional layer: 32 ($5\times5$) filters, with ReLU activation and no padding\\
($\to32\times28\times28$)
\item Pooling layer: $2\times2$ filter with a stride of 2 and no padding zeros\\
($\to32\times14\times14$)
\item Convolutional layer: 64 ($5\times5$) filters, with ReLU activation and no padding\\
($\to64\times10\times10$)
\item Pooling layer: $2\times2$ filter with a stride of 2 and no padding\\
($\to64\times5\times5$)
\item Convolutional layer: 128 ($5\times5$) filters, with RELU activation and no padding\\
($\to128\times1\times1$)
\end{itemize}
The decoder is specified by
\begin{itemize}
\item A dense linear layer with RELU activation to transform the dimension to 128
\item Transposed convolutional layer: 64 ($5\times5$) filters with stride 4; with RELU activation\\
($\to64\times4\times4$)
\item Transposed convolutional layer: 32 ($5\times5$) filters with stride 2; with RELU activation\\
($\to32\times8\times8$)
\item Transposed convolutional layer: 16 ($5\times5$) filters with stride 2; with RELU activation\\
($\to16\times16\times16$)
\item Transposed convolutional layer:  1 ($5\times5$) filters with stride 2; without non-linear activation\\
($\to1\times32\times32$)
\end{itemize}

\begin{figure}[p]
\centering
\includegraphics[width=\textwidth]{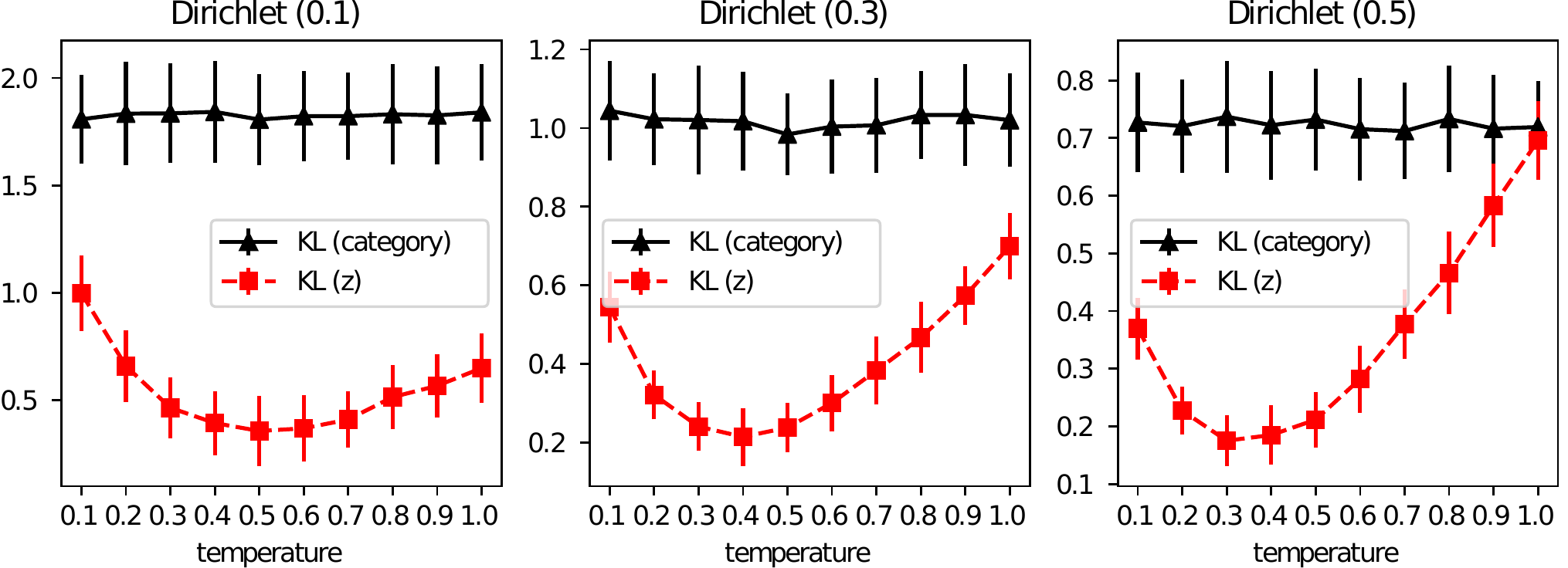}
\includegraphics[width=\textwidth]{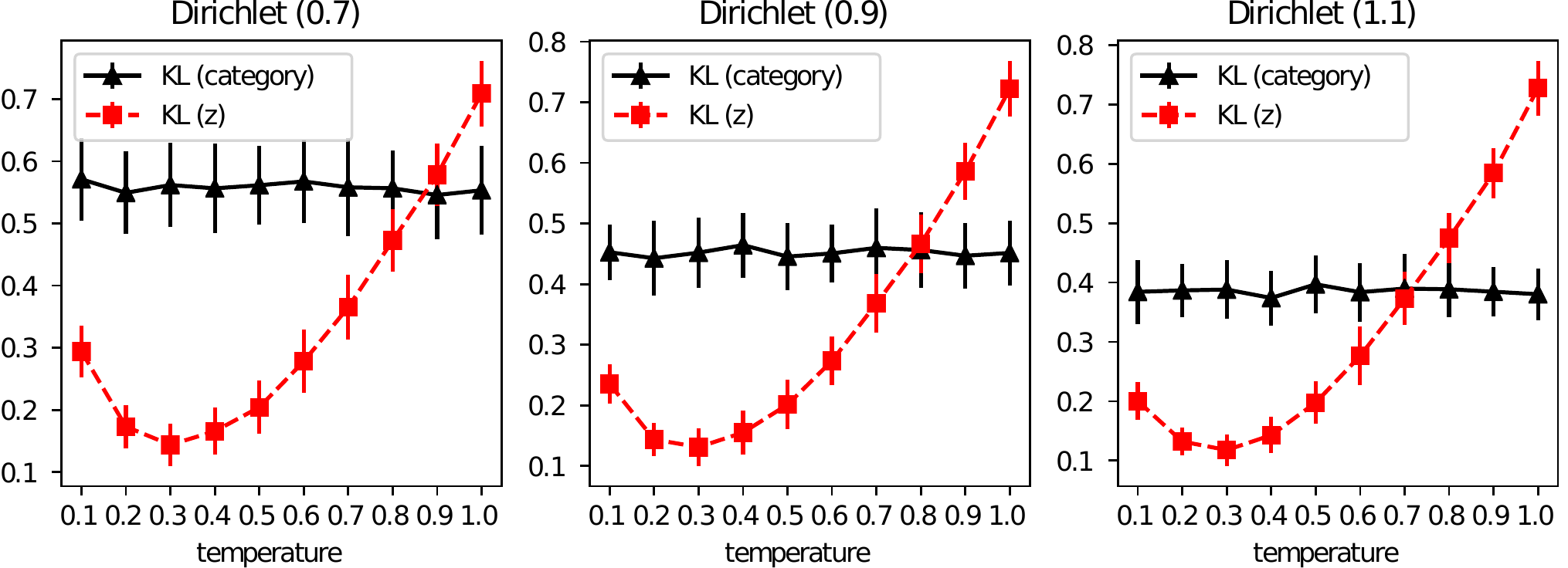}
\includegraphics[width=\textwidth]{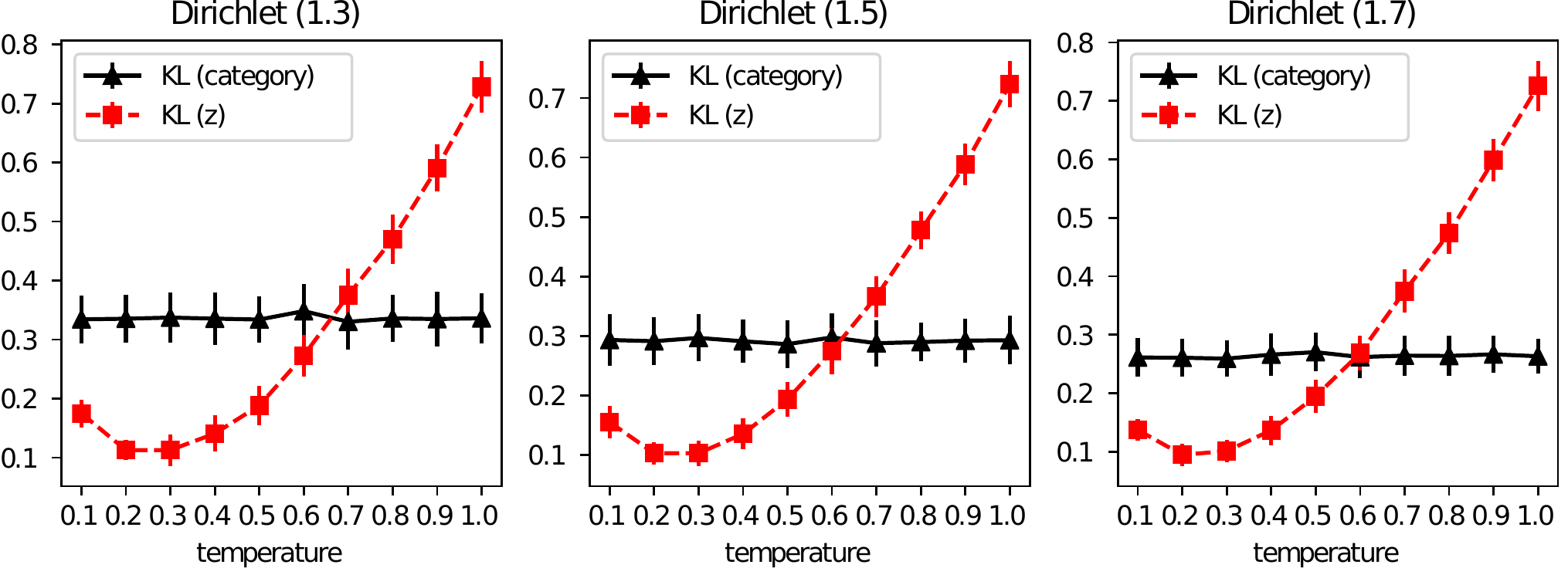}
\caption{KL divergence between category distributions v.s. KL divergence
between corresponding Gumbel distributions ($\bm{y}$) after dimensionality reduction ($\bm{z}$).
The figures show mean$\pm$standard deviation of KL. Both of KL divergences are computed by
discretizing $(-1,1)$ into $R=100$ intervals.}\label{fig:bound1}
\end{figure}

\begin{figure}[p]
\centering
\includegraphics[width=\textwidth]{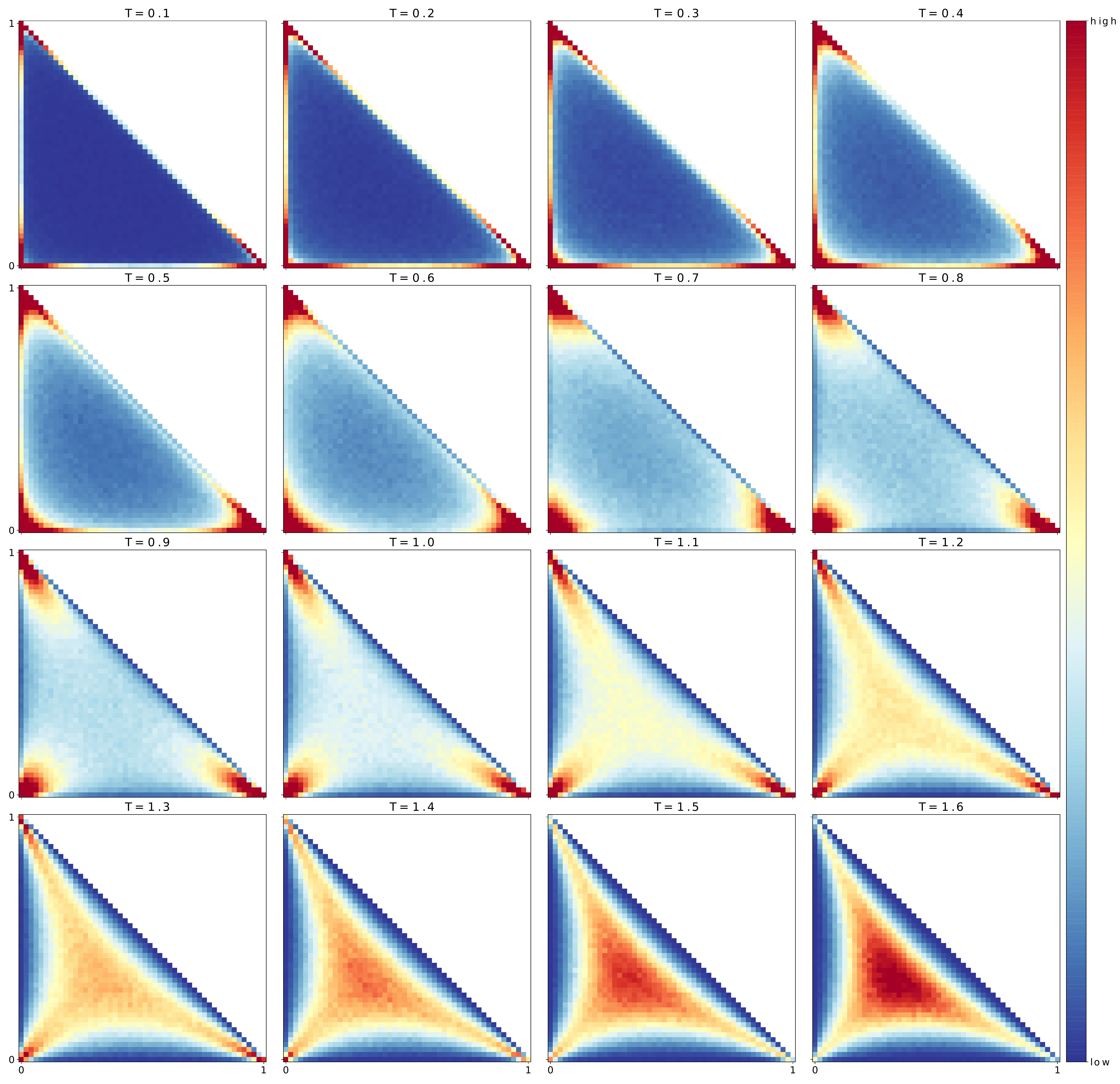}
\caption{The density of $\mathrm{Con}(\bm\alpha,T)$, where $\bm\alpha=(1/3,1/3,1/3)$ and $T=0.1,0.2,\cdots,1.6$
(from left to right, from top to bottom). Best viewed in color.}
\label{fig:con1}
\end{figure}

\begin{figure}[p]
\centering
\includegraphics[width=\textwidth]{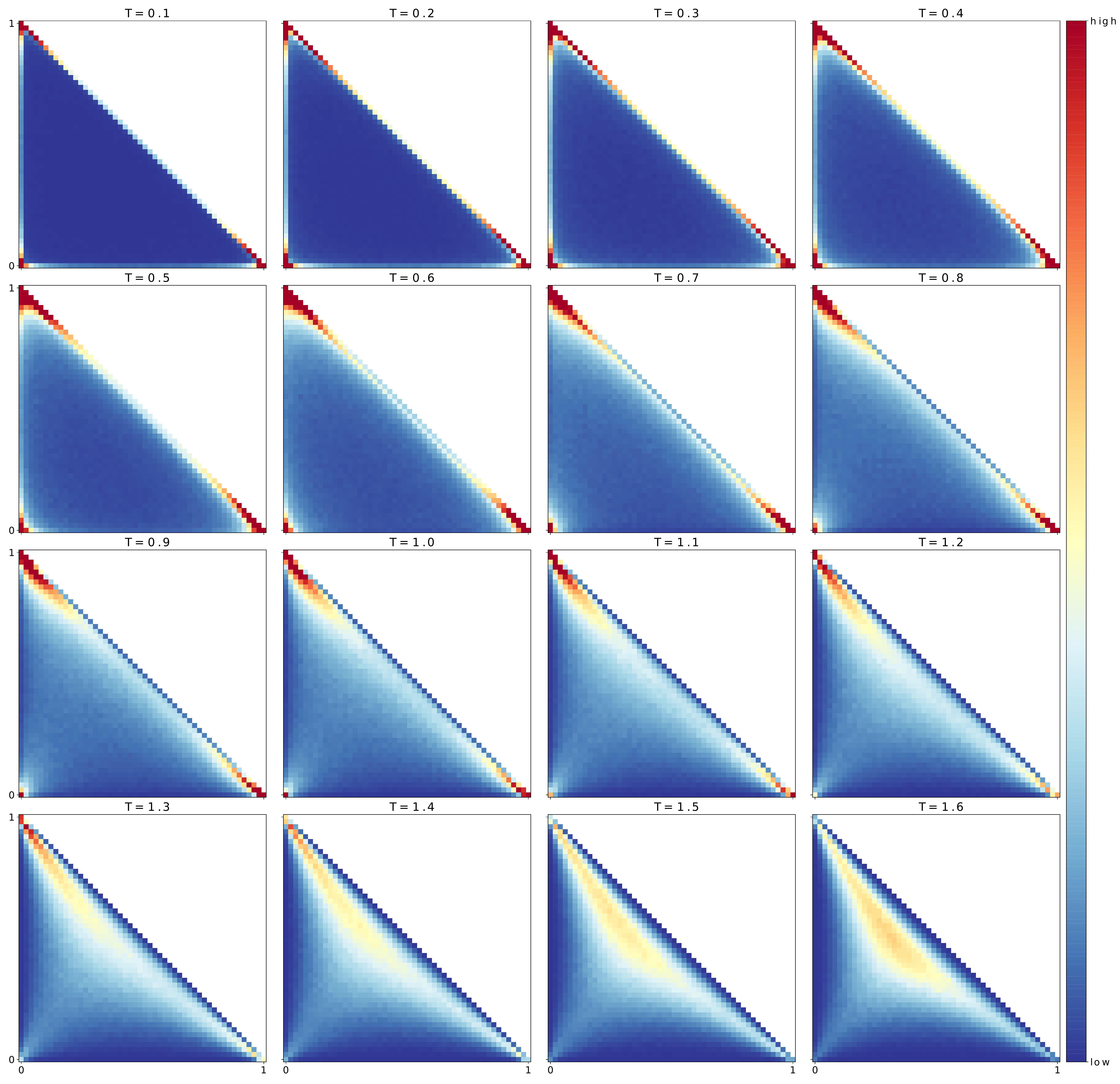}
\caption{The density of $\mathrm{Con}(\bm\alpha,T)$, where $\bm\alpha=(1/2,1/3,1/6)$ and $T=0.1,0.2,\cdots,1.6$
(from left to right, from top to bottom). Best viewed in color.}
\label{fig:con2}
\end{figure}

\begin{figure}[p]
\centering
\includegraphics[width=\textwidth]{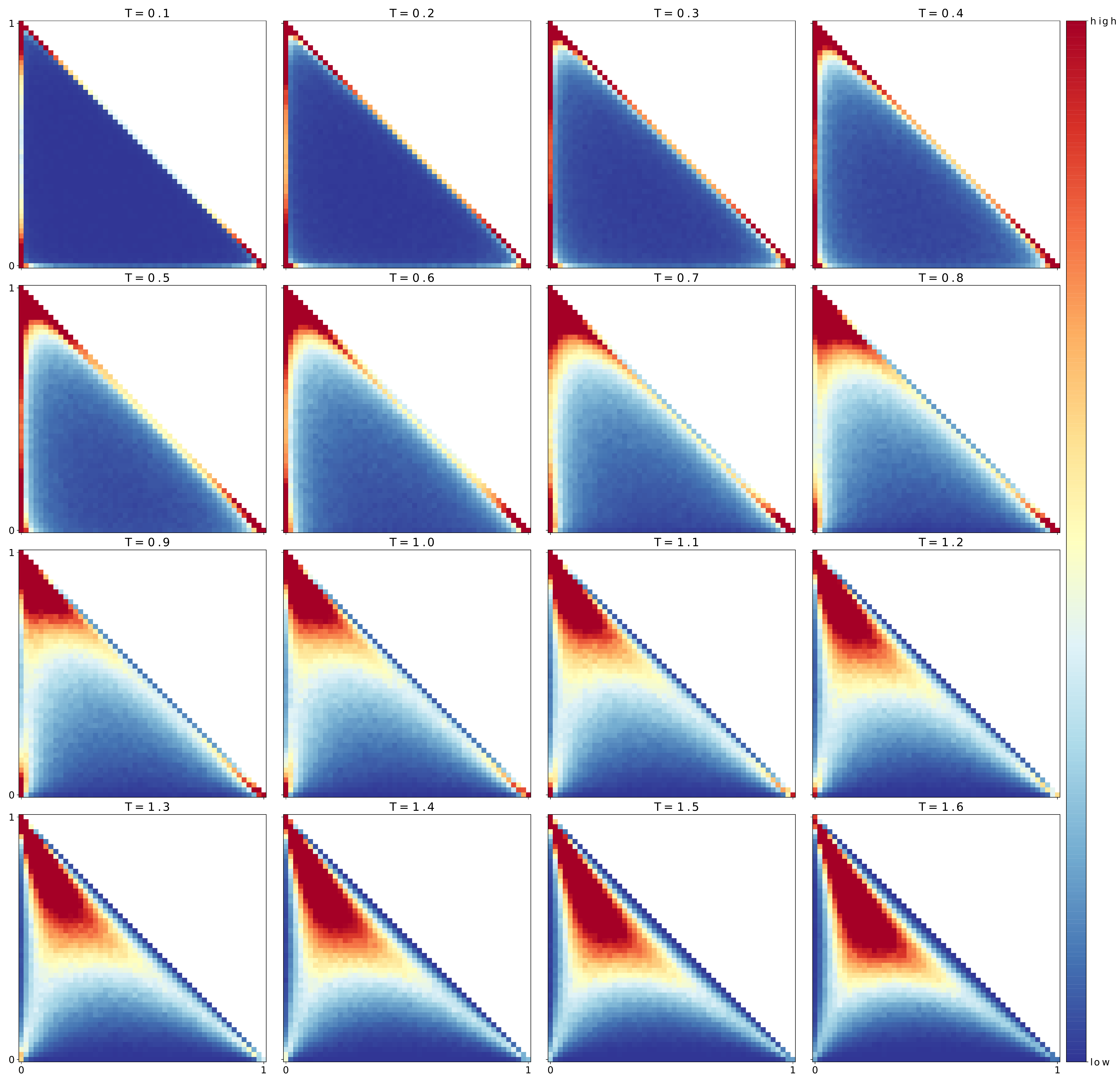}
\caption{The density of $\mathrm{Con}(\bm\alpha,T)$, where $\bm\alpha=(2/3,1/6,1/6)$ and $T=0.1,0.2,\cdots,1.6$
(from left to right, from top to bottom). Best viewed in color.}\label{fig:con3}
\end{figure}

\end{document}